\documentclass{article} 
\makeatletter
\def\input@path{{./}{EIP_ICLR2026/}}
\makeatother
\usepackage{iclr2026_conference,times}

\usepackage{amsmath,amsfonts,bm}

\def\eqref#1{equation~\ref{#1}}

\def\1{\bm{1}}

\DeclareMathAlphabet{\mathsfit}{\encodingdefault}{\sfdefault}{m}{sl}
\SetMathAlphabet{\mathsfit}{bold}{\encodingdefault}{\sfdefault}{bx}{n}

\usepackage{xcolor}
\usepackage{pifont} 
\newcommand{\cmark}{\textcolor{green!60!black}{\ding{51}}}%
\newcommand{\xmark}{\textcolor{red!80!black}{\ding{55}}}%
\usepackage{microtype}
\usepackage{graphicx}
\graphicspath{{./}{EIP_ICLR2026/}}
\usepackage{subcaption}
\usepackage{footmisc}
\usepackage{booktabs}
\usepackage{siunitx}

\usepackage{listings}
\usepackage[utf8]{inputenc}
\usepackage[T1]{fontenc}
\usepackage{amsmath,amssymb,amsfonts}
\usepackage{natbib}
\usepackage[table]{xcolor}
\definecolor{moonwhite}{RGB}{224, 241, 255}
\definecolor{pink_table}{HTML}{ffe5ec}
\definecolor{blue_table}{HTML}{caf0f8}
\definecolor{green_table}{HTML}{d8f3dc}
\usepackage[colorlinks,
            linkcolor=purple, 
            anchorcolor=blue,
            citecolor=blue!60!black
            ]{hyperref}
\usepackage{mathtools}
\usepackage{amsthm}
\usepackage{tabularx}
\usepackage{forest}
\usepackage{qtree}
\usepackage{nicefrac}       
\usepackage{wrapfig}

\usepackage{soul}
\usepackage{capt-of}
\usepackage{enumitem}
\usepackage{dsfont}

\usepackage{tikz}
\usetikzlibrary{shapes,arrows}

\usepackage{enumitem}
\usepackage{tcolorbox}
\usepackage{lipsum} 
\usepackage{tabularx}
\usepackage{titletoc}
\expandafter\def\expandafter\normalsize\expandafter{%
    \normalsize%
    \setlength\abovedisplayskip{0pt}%
    \setlength\belowdisplayskip{15pt}%
    \setlength\abovedisplayshortskip{-15pt}%
    \setlength\belowdisplayshortskip{2pt}%
}
\theoremstyle{plain}
\newtheorem{theorem}{Theorem}[section]

\theoremstyle{definition}

\theoremstyle{remark}

\title{EIP: Weighted Ranking of LLMs by Quantifying Question Difficulty}

\author{\textbf{Ziqian Zhang}$^1$, \textbf{Xingjian Hu}$^{1}$\thanks{Equal contribution, random order}, \textbf{Yue Huang}$^2$, \textbf{Kai Zhang}$^1$, \textbf{Ruoxi Chen}$^3$, \textbf{Yixin Liu}$^1$, \\
  \textbf{Qingsong Wen}$^4$, \textbf{Kaidi Xu}$^5$, \textbf{Xiangliang Zhang}$^2$, \textbf{Neil Zhenqiang Gong}$^6$, \textbf{Lichao Sun}$^1$\thanks{Corresponding author.} \\
  \\
  $^1$Lehigh University, $^2$University of Notre Dame, $^3$Zhejiang Wanli University, \\
  $^4$Squirrel Ai Learning, $^5$City University of Hong Kong, $^6$Duke University
}
\usepackage{url}

\iclrfinalcopy 
\begin{document}

\maketitle

\begin{abstract}
  Benchmarks establish a standardized evaluation framework to systematically assess the performance of large language models (LLMs), facilitating objective comparisons and driving advancements in the field. However, existing benchmarks fail to differentiate question difficulty, limiting their ability to effectively distinguish models' capabilities. To address this limitation, we propose Empirical Interaction Propagation (EIP), a novel framework designed to quantify both question difficulty and model competency. EIP introduces difficulty as the primary criterion for differentiation, enabling a more fine-grained evaluation of LLM capabilities. EIP's core mechanism facilitates bidirectional score propagation between models and questions. The core intuition of EIP is that a model earns a competency score when it correctly answers a question, while a question's difficulty score increases when it challenges a model. Using this framework, we evaluate 30 models on 35,550 questions across multiple domains. EIP achieves 90\% agreement with human judgments and consistently outperforms strong baselines such as IRT. It also exhibits strong stability, fast convergence, and high computational efficiency, making it a practical solution for large-scale, difficulty-aware LLM evaluation. Code is available at \href{https://github.com/Leozz04/EIP}{https://github.com/Leozz04/EIP}.
\end{abstract}

\vspace{-10pt}
\section{Introduction}
\vspace{-5pt}

Large Language Models (LLMs) have shown impressive breadth across tasks from natural language understanding to multi-step problem solving \citep{NaturalLanguageProcessing001, LLM_medic, hadi2023survey, kaddour2023challenges, zhou2023comprehensivesurveypretrainedfoundation}. As capability grows, reliable evaluation becomes the community's dashboard. However, popular benchmarks---e.g., MMLU-Pro \citep{wang2024MMLU-Pro} and MATH \citep{hendrycks2021math}---typically collapse performance to \emph{accuracy within topical categories}, implicitly treating all items as equally informative. This masks differences that hinge on \emph{difficulty} and can flip model rankings when the mixture of easy vs.\ hard items shifts. Counting a single arithmetic fact and a multi-step calculus derivation as equally ``correct'' illustrates the problem: the evaluation fails to distinguish routine pattern matching from advanced reasoning.

We posit that difficulty must be modeled explicitly at the item level. We introduce \textbf{Empirical Interaction Propagation (EIP)}, a simple, robust, and scalable framework that jointly estimates \emph{question difficulty} and \emph{model competency} from observed successes/failures. Inspired by PageRank\citep{page1999pagerank}, EIP constructs a directed bipartite interaction graph between models and questions and performs \textbf{damped bidirectional score propagation}: solving a hard question carries more evidential weight for competency; failing an easy question contributes more weight to difficulty. The process converges to a unique stationary solution, yielding difficulty and competency scores that support difficulty-aware comparison at scale. Compared to Item Response Theory (IRT), EIP is \emph{non-parametric}, avoids per-item logistic fits, and runs in \emph{linear time} in the number of model--question interactions, making it practical when per-model sample sizes are small and datasets are large.

\noindent\textbf{What EIP brings (methodological advantages).} We highlight five core advantages that make EIP practical and reliable for today's evaluation regimes: 1) \emph{Human alignment}: difficulty estimates reach {90\%} consensus with human judgments, outperforming multiple IRT baselines across datasets; 2) \emph{Efficiency and scalability}: propagation converges in {0.006 s} on consumer hardware and scales linearly with the number of model--question interactions, supporting tens of thousands of items and dozens of models; 3) \emph{Robustness}: question-side difficulty rankings remain highly stable under removal of individual models (\(\rho={0.998}\)), while model-side competency rankings remain stable under dataset perturbations (\(\rho>{0.99}\)); 4) \emph{Sensitivity beyond accuracy}: in controlled simulations, EIP recovers subtle performance gaps missed by accuracy- and IRT-based baselines, rewarding performance on harder items and enabling meaningful re-ordering among close models (adjacent changes consistent with Kendall's \(\tau={0.8492}\)); 5) \emph{Simplicity and reproducibility}: a non-parametric graph procedure with a single damping hyperparameter; no per-item calibration or curve fitting is required, enabling easy, consistent deployments.

\noindent\textbf{What EIP reveals (empirical findings).} Applying EIP at scale surfaces actionable empirical patterns: 1) \emph{Dataset-specific difficulty profiles}: {MATH} and {MMLU-Pro} exhibit broader difficulty tails suited for advanced reasoning, whereas {GSM8k} and {HellaSwag} skew easier; 2) \emph{Model-family consistency}: families preserve characteristic difficulty patterns across parameter scales---scaling primarily shifts absolute accuracy rather than the relative difficulty structure; 3) \emph{Open-weight model potential}: difficulty estimated on open-weight pools closely tracks full-pool results (Spearman {0.96}, Pearson {0.94}, Kendall {0.85}), rivaling proprietary pools; 4) \emph{Diversity benefits}: mixed-scale model pools reduce extreme misestimation by {83\%} relative to homogeneous pools and best align with human judgments ({90\%} consensus); 5) \emph{Size--performance correlation}: larger and proprietary models dominate the top competency scores, consistent with observed scaling trends.

We evaluate EIP across \textbf{30 models} and \textbf{35{,}550 questions} spanning BBH, GPQA, GSM8k, HellaSwag, MATH, and MMLU-Pro. Our study covers human alignment, convergence, robustness under model/dataset perturbations, and controlled simulations. Figure~\ref{fig:main} illustrates the joint estimation of question difficulty and model competency.

\begin{figure*}[t]
    \vspace{-6pt}
    \centering
    \includegraphics[width=\linewidth]{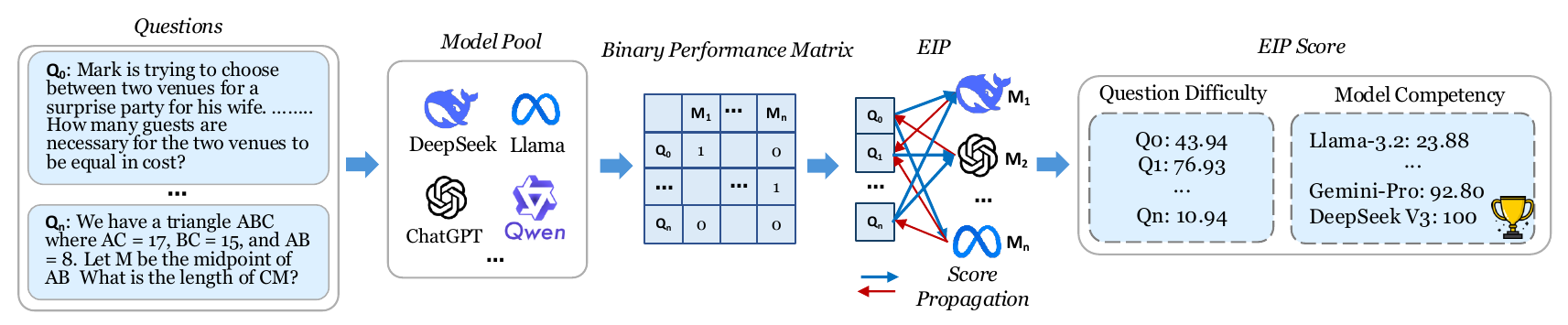}
    \vspace{-15pt}
    \caption{Schematic of EIP's weighted ranking pipeline.}
    \label{fig:main}
    \vspace{-15pt} 
\end{figure*}

\textbf{Contributions.}
This work makes three key contributions: 
1) \textbf{A new evaluation framework.} We introduce \emph{Empirical Interaction Propagation (EIP)}, a difficulty-aware and non-parametric approach that jointly estimates question difficulty and model competency. This allows evaluation to move beyond flat accuracy and provide finer-grained, difficulty-sensitive comparisons of LLM performance. 
2) \textbf{Comprehensive empirical validation.} Through experiments on six benchmarks covering 35{,}550 questions and 30 models, EIP demonstrates strong alignment with human difficulty judgments (90\% agreement), scalability to large evaluation settings, and robustness to changes in the model pool and dataset composition. 
3) \textbf{New insights into models and datasets.} By applying EIP at scale, we reveal distinctive dataset difficulty profiles, consistent family-level patterns across model scales, and the reliability of open-weight and mixed-scale pools for producing stable difficulty estimates, offering practical guidance for benchmark design and model selection.

\vspace{-10pt}
\section{Empirical Interaction Propagation (EIP)}
\vspace{-5pt}
\label{sec:EIP}

\vspace{-5pt}
\textbf{Method overview.} 
Question difficulty plays a central role in evaluating model performance. However, difficulty is inherently abstract and cannot be precisely defined. To address this, we operationalize difficulty through model failure: a question is considered difficult if even competent models are unable to solve it. EIP then evaluates question difficulty and measures model competency through an interactive process. EIP formulates questions and models as interdependent nodes within a directed bipartite graph, where questions' difficulty scores propagate to models that successfully solve questions, and models' competency score contributes to questions that they fail to answer correctly. We model this interconnection as an ergodic Markov chain defined on the bipartite graph.

We prove that this iterative process converges asymptotically to a unique stationary distribution, yielding a set of scores: $\pi_m$ for model competency and $\pi_q$ for question difficulty. A higher $\pi_m$ value signifies a model that consistently solves challenging questions, while a higher $\pi_q$ value indicates a question remains difficult even for competent models. These scores are mutually dependent, providing a self-reinforcing assessment of both models and questions.

\begin{figure}[t]
    \vspace{-10pt}
  \centering
  \includegraphics[width=\textwidth]{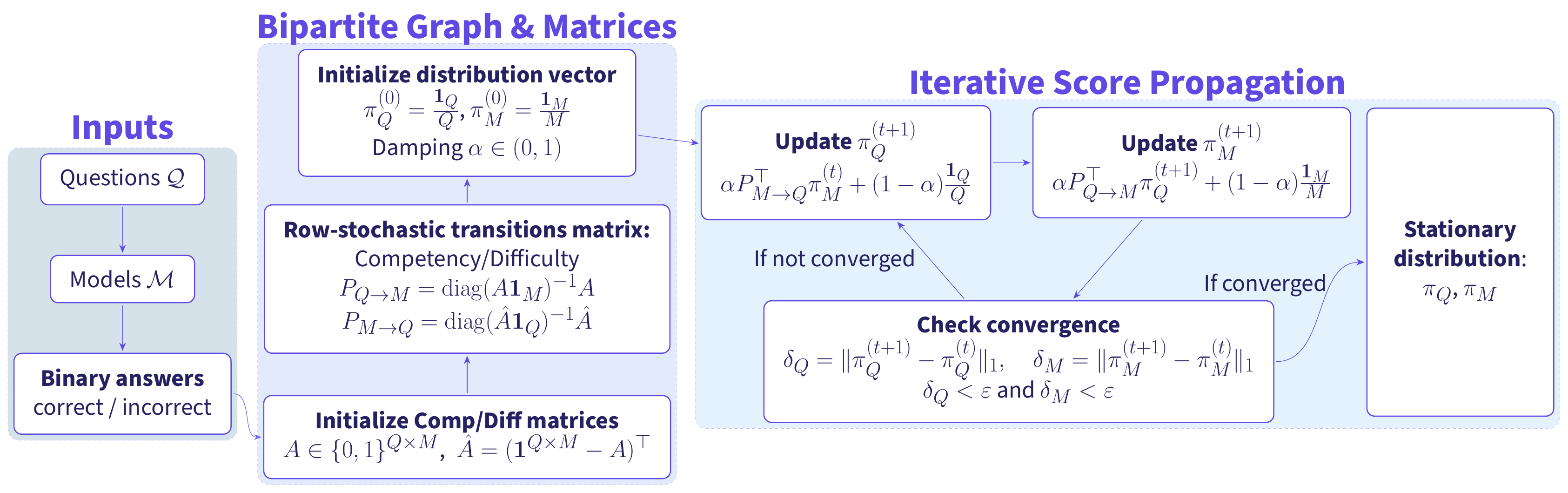}
  \vspace{-18pt}
  \caption{Detailed process demonstration of score propagation in EIP, which includes Evaluation Inputs, Bipartite Graph \& Matrices, and Iterative Score Propagation.}
  \label{fig:EIP-process}
  \vspace{-20pt}
\end{figure}

\vspace{-8pt}
\subsection{Scoring in EIP}
\vspace{-5pt}
We define the difficulty score \( \pi_q \) for questions and the competency score \( \pi_m \) for models based on the stationary distribution of a coupled random walk process. This formulation reflects the intuition that question difficulty is not an inherent property but rather emerges from the opposing patterns of model successes and failures. To enable bidirectional propagation between models and questions, we establish two reciprocal information flows. First, when model \( m \) successfully answers question \( q \), the difficulty weight of \( q \) is distributed to \( m \) in proportion to \( q \)'s total solvers. Conversely, when model \( m \) fails to answer question \( q \), its competency weight is transferred to \( q \) in proportion to the total number of failures \( m \) has encountered. Thus, the model competency $\pi_m$ and question difficulty $\pi_q$ are defined as:

{
\begin{equation}
\begin{aligned}
\pi_m &\propto \sum_{q \in \text{Success}(m)} \frac{\pi_q}{S(q)} \quad &
\pi_q &\propto \sum_{m \in \text{Fail}(q)} \frac{\pi_m}{F(m)}
\end{aligned}
\end{equation}
}

where \( S(q) \) represents the number of models that correctly solved \( q \), and \( F(m) \) denotes the total number of questions that model \( m \) has failed.
\vspace{-8pt}
\subsection{Directed Bipartite Graph and Performance Matrices}
\vspace{-5pt}
We formalize LLM-question interactions using a directed bipartite graph $\mathcal{G} = (\mathcal{V}, \mathcal{E})$. The vertex set $\mathcal{V} = \mathcal{M} \cup \mathcal{Q}$ comprises $M$ models $\mathcal{M}=\{m_1, \dots, m_M\}$ and $Q$ questions $\mathcal{Q}=\{q_1, \dots, q_Q\}$ (typically $M \ll Q$). The edge set $\mathcal{E} = \mathcal{E}_{\text{Comp}} \cup \mathcal{E}_{\text{Fail}}$ signifies model performance: \textbf{Competency Edges ($\mathcal{E}_{\text{Comp}}$):} A directed edge $(q_i \to m_j) \in \mathcal{E}_{\text{Comp}} \subseteq \mathcal{Q} \times \mathcal{M}$ indicates that model $m_j$ correctly answered question $q_i$. \textbf{Difficulty Edges ($\mathcal{E}_{\text{Fail}}$):} Conversely, a directed edge $(m_j \to q_i) \in \mathcal{E}_{\text{Fail}} \subseteq \mathcal{M} \times \mathcal{Q}$ indicates that model $m_j$ failed to answer question $q_i$.

These interactions are encoded into two complementary binary matrices, constructed after filtering questions: \textbf{Competency Matrix ($A$):} This matrix $A \in \{0,1\}^{Q \times M}$ has entries $A_{ij} = \mathbb{I}[(q_i \to m_j) \in \mathcal{E}_{\text{Comp}}]$, where $A_{ij}=1$ if model $m_j$ correctly answered question $q_i$. \textbf{Difficulty Matrix ($\hat{A}$):} This matrix $\hat{A} \in \{0,1\}^{M \times Q}$, expressed as $\hat{A} = (\mathbf{1}^{Q\times M} - A)^\top$, is the transposed complement of $A$. Its entries $\hat{A}_{ji} = \mathbb{I}[(m_j \to q_i) \in \mathcal{E}_{\text{Fail}}]$ indicate if model $m_j$ failed question $q_i$.

Prior to matrix construction, we exclude questions universally solved by all models or failed by all models, ensuring $0 < \sum_{j=1}^{M} A_{ij} < M$ for all retained questions. This step guarantees graph connectivity and avoids trivial solutions. Such extreme cases are rare (e.g., in our experiments with 35,550 questions and 30 models, only 2\% of questions were universally solved or failed, while no models showed 100\%/0\% accuracy) and are assigned conceptual lowest/highest difficulty. These matrices map the graph's edges to linear operators.

\vspace{-5pt}
\subsection{Propagating Scores via Iterative Refinement}
\label{subsec:transition_matrices}
\vspace{-5pt}
Having established the directed bipartite graph, we formalize the score propagation. EIP employs an iterative refinement process to mutually determine question difficulty scores ($\pi_Q$) and model competency scores ($\pi_M$). The core of this process relies on two row-stochastic transition matrices:

\textbf{Competency transition ($P_{Q \to M}$):} $P_{Q \to M} = \text{diag}(A\mathbf{1}_M)^{-1}A $. Given a question $q$, the probability of transitioning to a model $m$ is proportional to $m$'s success on $q$. The $q$-th row is normalized by $S(q)$, the number of models that correctly solved $q$.

\textbf{Difficulty transition ($P_{M \to Q}$):} $P_{M \to Q} = \text{diag}(\hat{A}\mathbf{1}_Q)^{-1}\hat{A}$. Given a model $m$, the probability of transitioning to a question $q$ is proportional to $m$'s failure on $q$. The $m$-th row is normalized by $F(m)$, the number of questions $m$ failed.

Scores are updated iteratively. An underlying random walk on the bipartite graph ($\mathcal{Q} \leftrightarrow \mathcal{M}$) would be 2-periodic. To ensure convergence to a unique stationary distribution, we introduce a damping factor $\alpha \in (0,1)$, representing a "teleportation" probability. The iterative update equations for the scores at step $t+1$ are:

\noindent
\begin{minipage}{0.48\textwidth}
\begin{equation}
\pi_Q^{(t+1)} = \alpha P_{M \to Q}^\top \pi_M^{(t)} + (1-\alpha)\frac{\mathbf{1}_Q}{Q}
\label{eq:iter_q}
\end{equation}
\end{minipage}
\hfill
\begin{minipage}{0.48\textwidth}
\begin{equation}
\pi_M^{(t+1)} = \alpha P_{Q \to M}^\top \pi_Q^{(t+1)} + (1-\alpha)\frac{\mathbf{1}_M}{M}
\label{eq:iter_m}
\end{equation}
\end{minipage}

Here, $\mathbf{1}_Q$ and $\mathbf{1}_M$ are all-ones vectors of appropriate dimensions, $Q = |\mathcal{Q}|$, and $M = |\mathcal{M}|$. The transposes $P_{M \to Q}^\top$ and $P_{Q \to M}^\top$ facilitate score propagation. $\pi_M^{(t+1)}$ uses the just-updated $\pi_Q^{(t+1)}$.

This iterative process, where competency and difficulty scores are mutually reinforced, is guaranteed to converge to unique stationary distributions $\pi_Q$ and $\pi_M$. This is because the damped system forms an ergodic Markov chain, whose properties are established by the Perron-Frobenius theorem~\citep{Perron1907}. A formal proof of existence, uniqueness, and convergence is provided in \autoref{convergence_proof}. 

\vspace{-5pt}
\subsection{Extending EIP to Continuous Scores}
\vspace{-5pt}
Benchmarks that grade free-form answers often provide partial credit rather than binary outcomes. EIP 
accommodates this setting by allowing the response matrix to take values in the unit interval. Let $A_c\in[0,1]^{Q\times M}$ denote this continuous analogue of the correct-response matrix ($c$ stands for ``\textbf{c}continuous'').  All subsequent quantities mirror the binary case after replacing $A$ with $A_c$:
The row sums $S(q) = \sum_{m=1}^{M} (A_c)_{q m}$ represent the total score achieved on question $q$, with questions satisfying $S(q) = 0$ removed as before. The complement matrix $\hat{A}_c = \mathbf{1}^{Q\times M}-A_c$ and corresponding column sums $F(m) = \sum_{q=1}^{Q} (\hat{A}_c)_{m q}$ maintain the same interpretation, with $F(m) > 0$ ensured in practice. The transition matrices retain the same form:

\begin{minipage}{0.48\textwidth}
\begin{equation}
P_{Q\to M} = \text{diag}(S)^{-1} A_c
\end{equation}
\end{minipage}
\hfill
\begin{minipage}{0.48\textwidth}
\begin{equation}
P_{M\to Q} = \text{diag}(F)^{-1} \hat{A}_c
\end{equation}
\end{minipage}
The iterative updates in \autoref{eq:iter_q}--\ref{eq:iter_m} and convergence guarantee of \autoref{thm:existence-uniqueness} apply \emph{unchanged}.

\vspace{-5pt}
\section{Experiment}
\vspace{-5pt}
\subsection{Experiment Setup}
\vspace{-5pt}

\begin{wraptable}{r}{0.42\textwidth}
  \renewcommand{\arraystretch}{1.0}
  \small
  \vspace{-10pt}
  \caption{Details of selected datasets.}
  \label{tab:dataset_detail}
  \centering
  \scalebox{0.75}{
    \begin{tabular}{cc}
      \toprule[1pt]
      \textbf{Dataset Name} & \textbf{\# Questions} \\
      \hline
      BBH \citep{suzgun2022BBH} & 6511 \\
      \rowcolor{cyan!5} GPQA \citep{rein2023gpqa} & 646 \\
      GSM8k \citep{cobbe2021GSM8K} & 1319 \\
      \rowcolor{cyan!5} HellaSwag \citep{zellers2019HellaSwag} & 10042 \\
      MATH \citep{hendrycks2021math} & 5000 \\
      \rowcolor{cyan!5} MMLU-Pro \citep{wang2024MMLU-Pro} & 12032 \\
      \toprule[1pt]
    \end{tabular}
  }
  \vspace{-10pt}
\end{wraptable}

\textbf{Datasets \& benchmarks \& models.} We select a diverse and representative set of datasets that encompasses diverse domains, including math, science, natural language understanding, and programming. We show the details of selected datasets in \autoref{tab:dataset_detail}. We aggregate a diverse selection of 30 LLMs, covering a wide range of sizes from 0.5B to hundreds of billions parameters\footnotemark. The models cover a diverse range and incorporate both open-source and proprietary models to ensure a comprehensive evaluation. The details of selected models are shown in \autoref{app:model_details}.
\footnotetext{In EIP, a model is categorized as small if it has fewer than 10B parameters, medium if its size ranges between 10B and 50B parameters, and large if it exceeds 50B parameters.}

\vspace{-8pt}
\subsection{Main Results}
\vspace{-5pt}

\begin{figure*}[t]
    \centering\includegraphics[width=\linewidth]{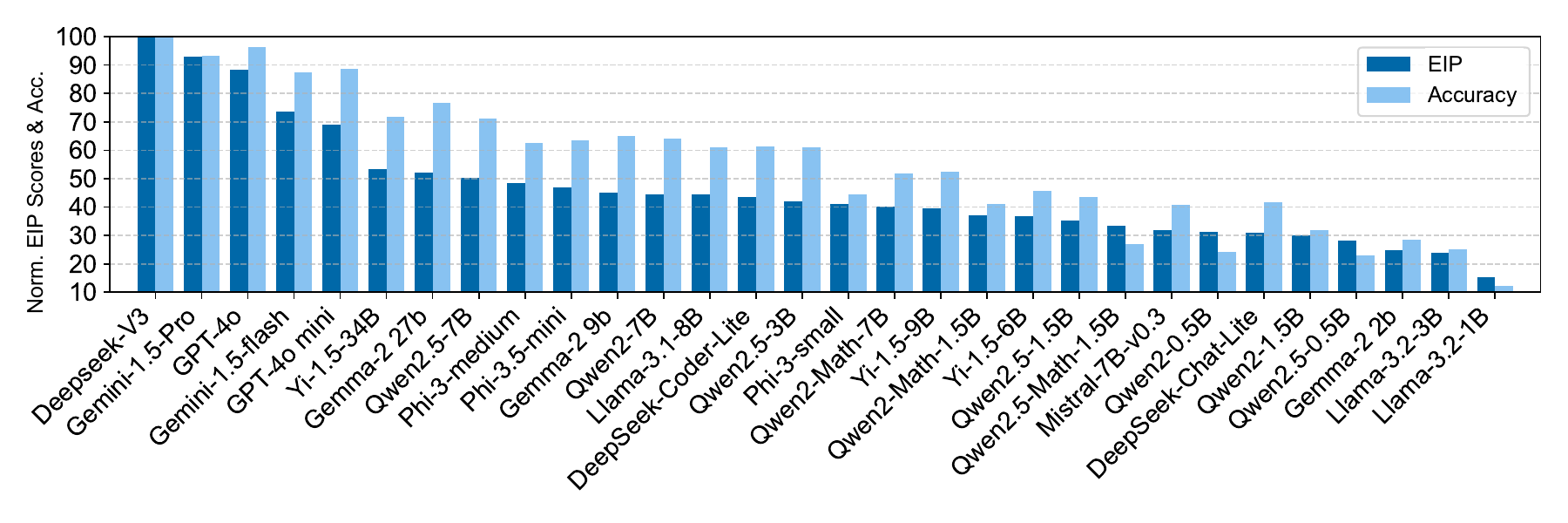}
    \vspace{-25pt}
    \caption{EIP scores and accuracies of models, both normalized to a fixed maximum of 100 to facilitate direct comparison. The full names of the models are provided in \autoref{app:model_details}.}
    \label{fig:model_two_alg}
    \vspace{-10pt}
\end{figure*}

\textbf{EIP offers finer-grained, difficulty-aware insights beyond traditional accuracy-based ranking.} A key distinction between EIP and traditional accuracy-based evaluations lies in how question difficulty is incorporated. Traditional accuracy metrics treat all questions equally, which can lead to underestimating models whose overall accuracy is limited but whose strength lies in solving difficult questions. In contrast, EIP explicitly incorporates quantified question difficulty, enabling a more nuanced evaluation of model performance.

The positive correlation between EIP scores and accuracy (Kendall's Tau $\tau = 0.8492$; see \autoref{sec:correlation_study}) aligns with our expectations, as stronger models typically answer more questions correctly and consequently receive higher scores. This validates that EIP serves as an enhancement to traditional accuracy-based ranking rather than a complete departure from it. However, \autoref{fig:model_two_alg} reveals that significant discrepancies in model rankings frequently emerge, particularly among adjacent models. While the overall trend preserves the general hierarchy between clearly superior and inferior models, substantial rank changes frequently occur between closely-performing models, highlighting EIP's ability to provide finer-grained differentiation. In particular, models with high raw accuracy may receive lower EIP scores than their neighbors, while some lower-accuracy models are ranked higher---reflecting their stronger performance on more difficult questions. For instance, Qwen2-0.5B (20.2\%) is ranked higher than DeepSeek-Chat-Lite (30.49\%). This re-ranking stems from EIP's central design: assigning greater credit to correct answers on more challenging questions. \autoref{fig:pie_dis_correct_whole} supports this observation: Qwen2-0.5B correctly answered 5.5\% of hard questions, compared to only 2.4\% for DeepSeek-Chat-Lite, which substantially boosts its EIP score. A similar pattern is observed among top-performing models: while GPT-4o achieves higher overall accuracy, Gemini-1.5-Pro receives a higher EIP score due to answering 5\% more medium- and high-difficulty questions. This illustrates how EIP identifies strengths on difficult questions that are otherwise obscured by flat accuracy scores---a finding consistent with our simulated Case Study (\autoref{Sec:case_study}).


\begin{figure}[t]
    \centering
    \begin{minipage}[t]{0.48\linewidth}
        \centering
        \includegraphics[width=1.0\linewidth]{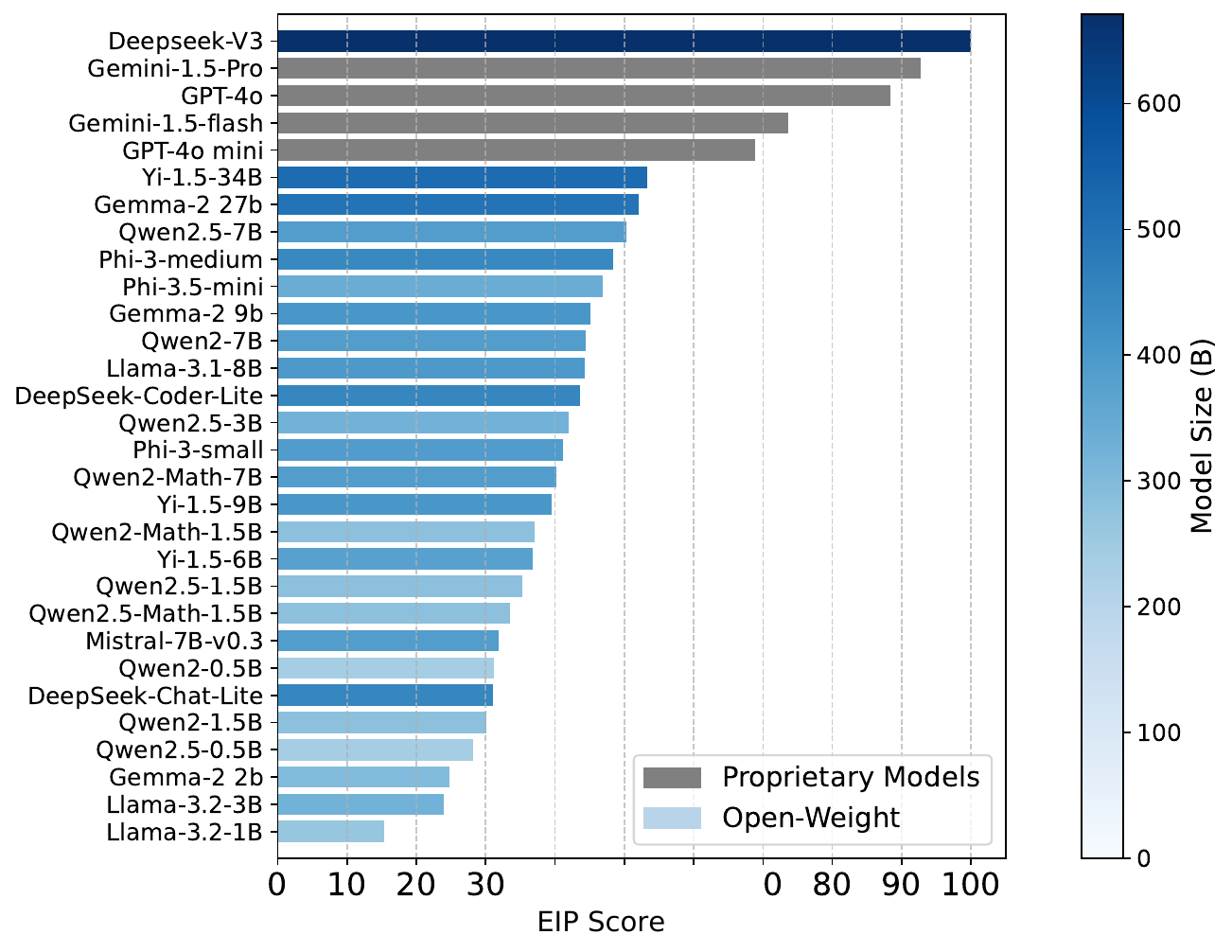}
        \caption{Relationship between model performance and parameter scale.}
        \label{fig:model_size}
    \end{minipage}
    \hfill
    \begin{minipage}[t]{0.48\linewidth}
        \centering
        \includegraphics[width=1.0\linewidth]{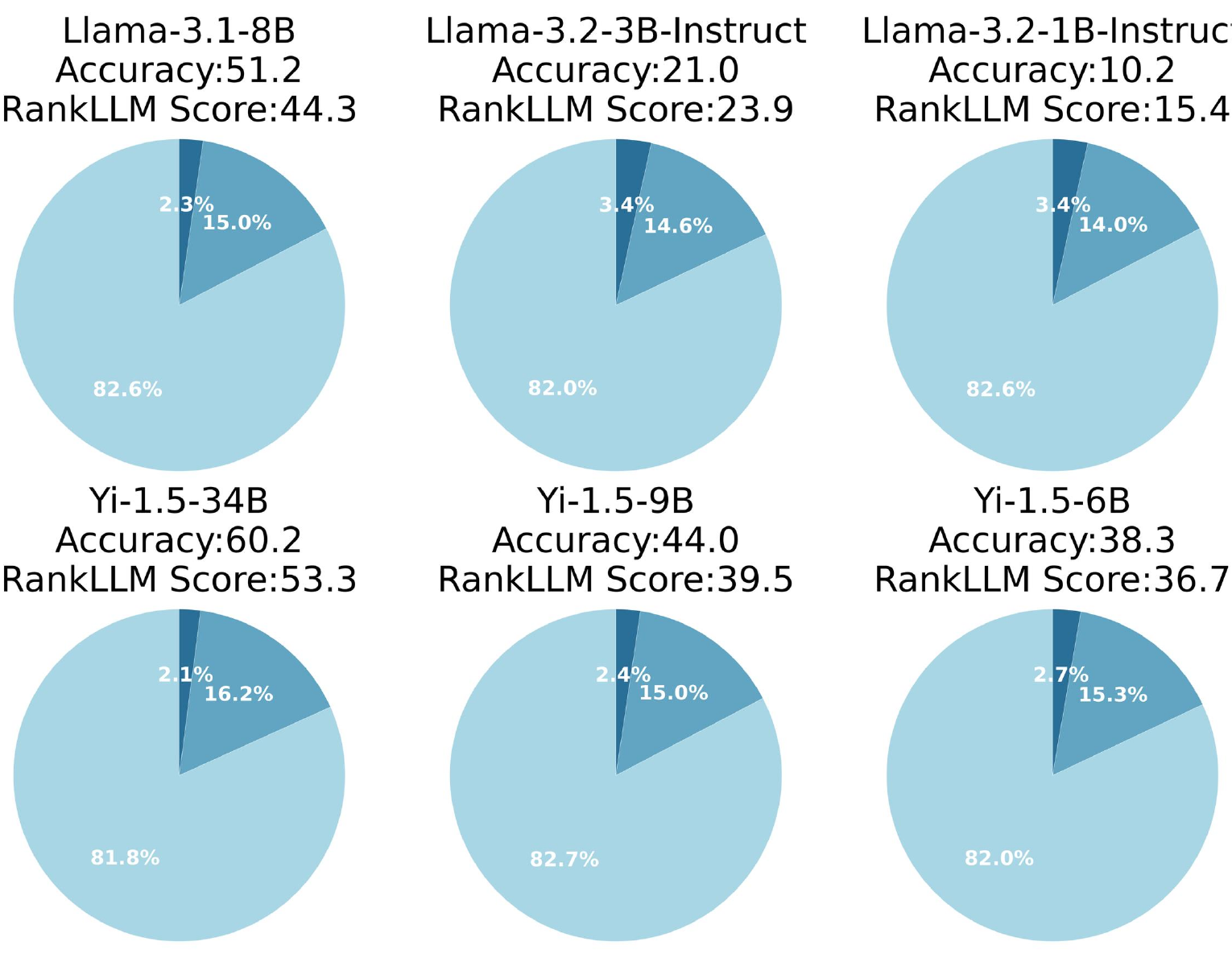}
        \caption{The proportion of Easy/Medium/Hard questions within correctly answered samples across Llama/Yi variants.}
        \label{fig:Llama_breakdown}
        \vspace{-20pt}
    \end{minipage}
\end{figure}

\textbf{Model families maintain consistent difficulty patterns despite scaling effects.} As shown in \autoref{fig:Llama_breakdown}, our leave-one-out\footnotemark{}\footnotetext{Leave-One-Out is a cross-validation method that iteratively uses each sample in the dataset as the validation set, while the remaining samples form the training set.} evaluation demonstrates that models within the Llama-3 family (Llama-3.1-8B, Llama-3.2-3B-Instruct, Llama-3.2-1B-Instruct) exhibit stable difficulty-specific answering distributions across different parameter scales. Despite substantial differences in overall accuracy ($51.2\%\rightarrow21.0\%\rightarrow10.2\%$) and EIP scores ($44.3\%\rightarrow23.9\%\rightarrow15.4\%$), all variants preserve a similar distribution of correct responses across difficulty levels (hard / medium / easy). Notably, the 8B model demonstrates a slight disadvantage in hard-question accuracy (11.1\% compared to 15.7\% and 15.6\% for the smaller variants), yet maintains nearly identical easy-question accuracy (62.9\% compared to 60.4\% and 61.5\% for the smaller variants). This trend extends across other model families, including Qwen and Yi (\autoref{Appendix Answer Dis Across Difficulty}), indicating that:
1) Accuracy scaling primarily influences absolute performance rather than altering relative difficulty distributions.  
2) Traditional accuracy metrics obscure essential behavioral consistencies across model scales.


\begin{figure*}
    \centering
    \includegraphics[width=\linewidth]{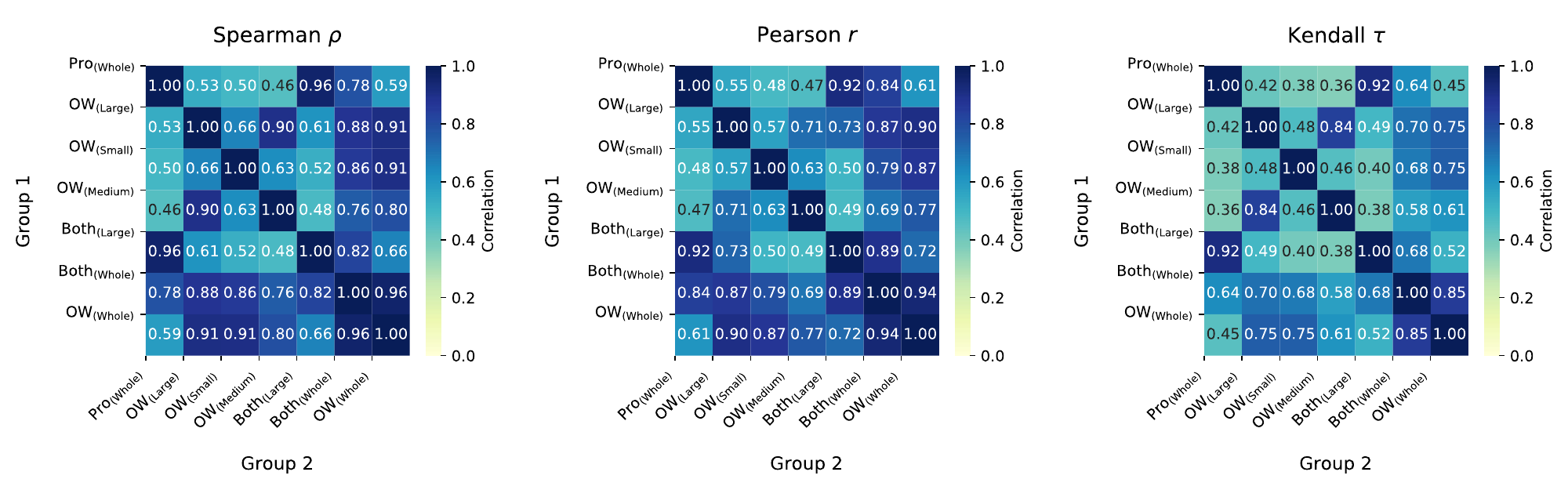}
    \vspace{-10pt}
    \caption{Correlation analysis across model group combinations (Pro. means proprietary, and OW means open weight).}
    \label{fig:correlation_heatmaps}
    \vspace{-18pt}
\end{figure*}

\textbf{Open-weight models show strong potential in estimating difficulty, rivaling proprietary models.} To evaluate the consistency of question difficulty distributions across model groups, we employed three correlation methods: Spearman, Pearson, and Kendall. These methods, capturing monotonic, linear, and rank-based correlations respectively, yielded highly consistent results, indicating strong agreement in the difficulty rankings produced by different model sets. As shown in \autoref{fig:correlation_heatmaps}, the difficulty distributions derived from various model groups exhibit significant correlations. The difficulty distributions generated by open-weight models of all sizes (OW$_{\text{(Whole)}}$) demonstrate a strong positive correlation with those generated by the entire set of models (Both$_{\text{(Whole)}}$). Quantitatively, the Spearman, Pearson, and Kendall correlations are 0.96, 0.94, and 0.85, respectively. These high correlation values across all three metrics indicate a strong agreement between the difficulty rankings produced by open-weight models and those produced by the full model set, suggesting that open-weight models alone are capable of capturing the overall difficulty landscape as represented by EIP. The difficulty distributions derived from all proprietary models (Pro$_{\text{(Whole)}}$) show a correlation with those derived from all models (Both$_{\text{(Whole)}}$), with a correlation of 0.78, 0.84, and 0.64. While still demonstrating agreement, there is a noticeable drop in the Spearman correlation compared to the open-weight models.

\begin{figure}[htbp]
    \centering
    \vspace{-13.5pt}
    \begin{minipage}[t]{0.48\linewidth}
        \centering
        \includegraphics[width=\linewidth]{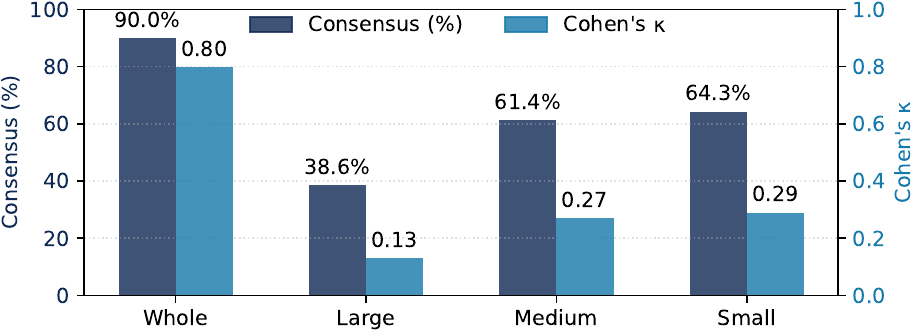}
        \caption{Consensus alignment with human judgments on EIP-estimated question difficulty by model size group.}
        \label{fig:human_alignment_by_group}
    \end{minipage}
    \hfill
    \begin{minipage}[t]{0.48\linewidth}
        \centering
        \includegraphics[width=\linewidth]{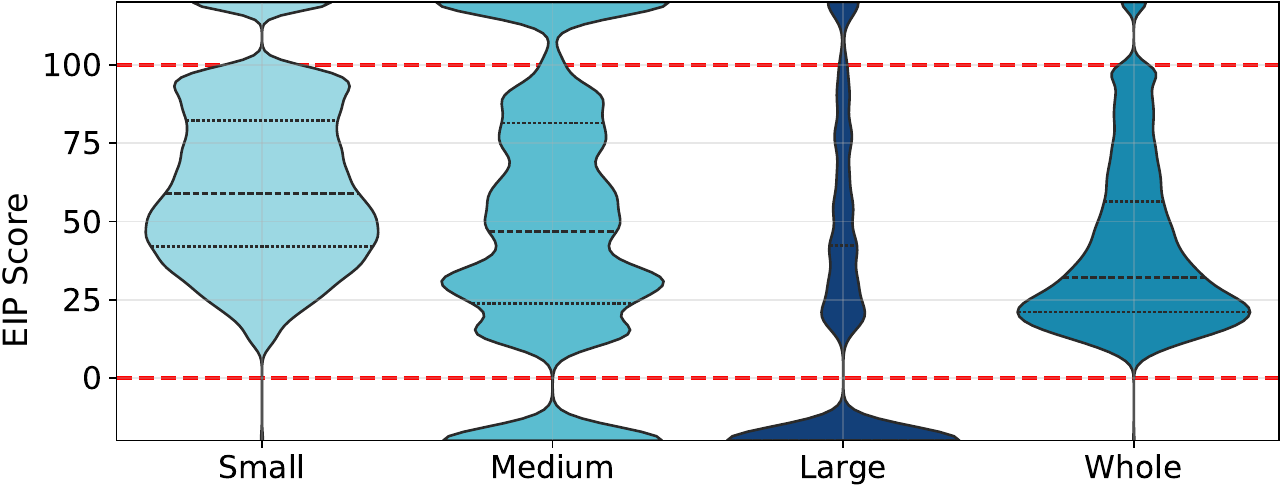}
        \caption{Difficulty distribution grouped by model size.}
        \label{fig:difficulty_distribution_by_group}
    \end{minipage}
    \vspace{-10pt}
\end{figure}

\begin{table*}[t]
\centering
\caption{Alignment analysis with human evaluation on question difficulty (all metrics in \%).}
\resizebox{\textwidth}{!}{%
\setlength{\tabcolsep}{10pt}
\label{tab:human_alignment}
\small
\vspace{0.5em}
\rowcolors{2}{white}{cyan!5}
\begin{tabular}{lccccccccccccc}
\toprule[1pt]
\rowcolor{gray!5}\textbf{Method} & $V_1$ & $V_2$ & $V_3$ & $V_4$ & $V_5$ & $V_6$ & $V_7$ & $V_8$ & $V_9$ & $\cdots$ & $V_{20}$ & \textbf{Consensus (\%)} \\
\midrule
Simple Rank & 41.4 & 54.3 & 50.0 & 60.0 & 51.4 & 54.3 & 54.3 & 50.0 & 50.0 & $\cdots$ & 52.9 & 62.9 \\
1PL-IRT & 37.1 & 44.3 & 40.0 & 52.9 & 51.4 & 44.3 & 45.7 & 45.7 & 48.6 & $\cdots$ & 41.4 & 50.0 \\
2PL-IRT & 35.7 & 45.7 & 41.4 & 54.3 & 52.9 & 42.9 & 44.3 & 47.1 & 47.1 & $\cdots$ & 42.9 & 51.4 \\
Multi-IRT & \textbf{50.0} & 50.0 & 48.6 & 54.3 & 48.6 & 57.1 & 47.1 & 47.1 & 44.3 & $\cdots$ & 48.6 & 52.9 \\
\textbf{EIP} & 37.1 & \textbf{70.0} & \textbf{62.9} & \textbf{71.4} & \textbf{67.1} & \textbf{61.4} & \textbf{74.3} & \textbf{64.3} & \textbf{57.1} & $\cdots$ & \textbf{62.9} & \textbf{90.0} \\
\bottomrule[1pt]
\end{tabular}
}
\vspace{-15pt}
\end{table*}

\textbf{A diverse model pool mitigates extreme question-difficulty estimates in EIP.}  
As shown in \autoref{fig:difficulty_distribution_by_group}\footnote{Area above 100 represents questions that all models failed, below 0 are all correct answers.}, we analyze the impact of model diversity on difficulty estimation by running EIP with four different annotator model pools: small-only, medium-only, large-only, and a combination of all (Whole). Our findings highlight that the composition of the model pool used by EIP plays a crucial role in determining the quality of difficulty estimation. Homogeneous model groups tend to exhibit more extreme difficulty distributions. Specifically, the Large group classifies 58\% of questions as "overly easy", while the Small and Medium groups contain over 30\% "impossible" questions. These extremes introduce a high proportion of "dead nodes" in EIP's algorithm, where models either consistently overestimate or underestimate question difficulty. In contrast, the Whole group, which includes models of varying scales, achieves a more balanced difficulty distribution. Complementary error patterns across different model scales reduce extreme cases to below 3\%. Moreover, the mixed-scale ensemble maintains meaningful bidirectional score propagation in EIP, reducing dead nodes by 83\% compared to homogeneous groups. These results align with the "wisdom of the crowd" principle, demonstrating that incorporating diverse models not only mitigates mutual biases but also preserves sufficient granularity to prevent systematic overestimation or underestimation in difficulty assessment.

\begin{wrapfigure}{r}{0.45\textwidth}
    \vspace{-10pt}
    \centering
    \includegraphics[width=\linewidth]{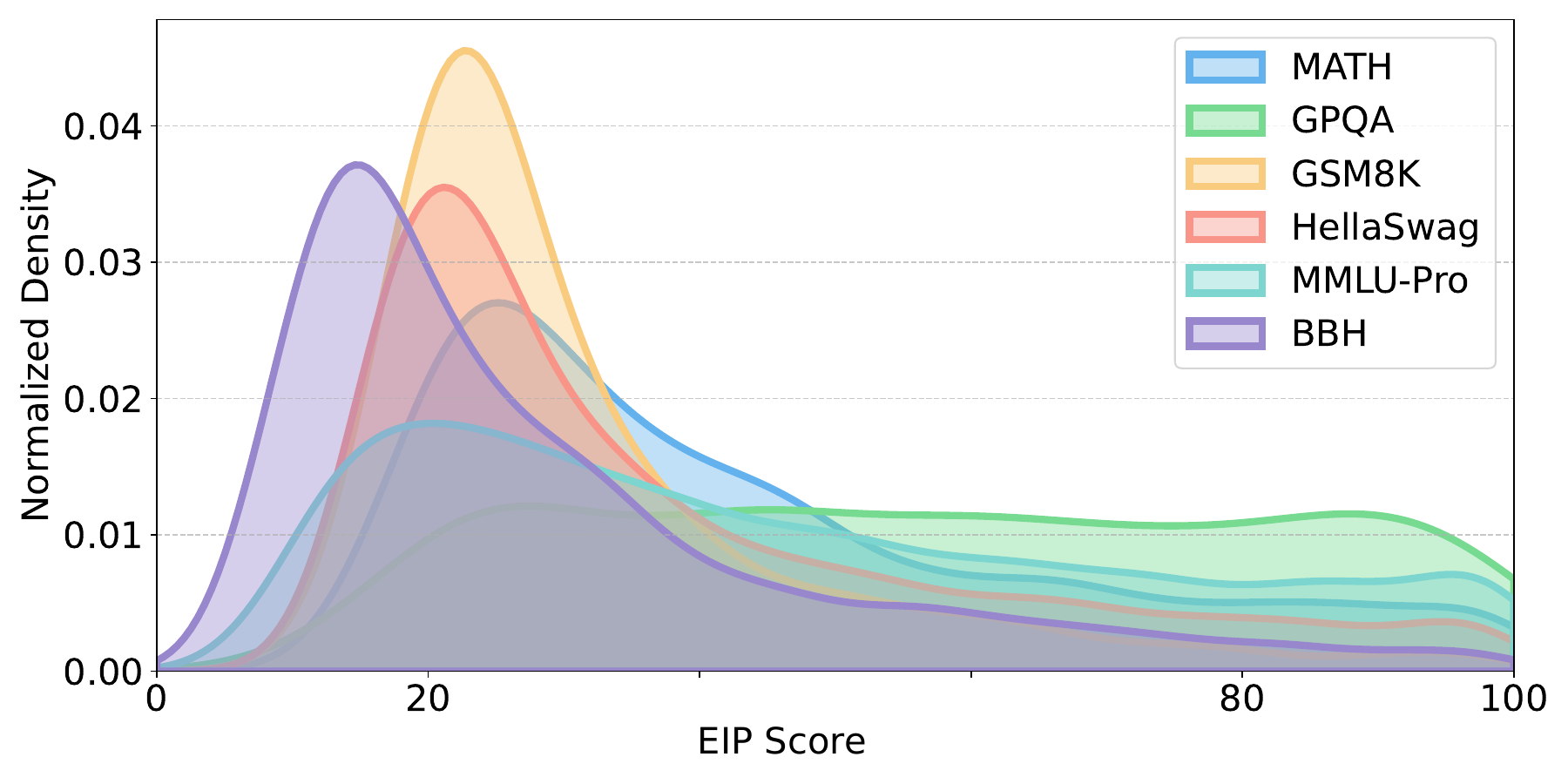}
    \vspace{-20pt}
    \caption{Question difficulty distributions for six benchmarks.}
    \label{fig:question_6benchmarks}
    \vspace{-10pt}
\end{wrapfigure}
\textbf{Difficulty distribution varies by dataset.} As shown in \autoref{fig:question_6benchmarks}, the six benchmarks exhibit distinct difficulty profiles. GPQA has a relatively uniform distribution, making it ideal for evaluating performance across a range of difficulty levels. In contrast, GSM8k, HellaSwag, and BBH are skewed toward easier questions, with most items concentrated at lower EIP difficulty scores. This skew may limit their utility in assessing performance on harder tasks but provides insights into models' fundamental capabilities. MATH and MMLU-Pro, on the other hand, display broader and more dispersed distributions, with MATH emphasizing harder questions and MMLU-Pro displaying a bimodal pattern, blending challenging items. These benchmarks, therefore, are better suited for evaluating advanced reasoning capabilities and the robustness of models.

\vspace{-7pt}
\subsection{Human Evaluation}
\label{subsec: human_alignment}
\vspace{-5pt}

To validate the alignment of EIP with human judgment, we conducted a controlled blind trial involving 20 human evaluators ($V_1$--$V_{20}$). Each evaluator compared two questions from the same subject category (e.g., mathematics) to judge which was more difficult. There are 70 randomly ordered question pairs in total. A detailed evaluation protocol is in \autoref{subsec:humaneval_detail}.

\textbf{Baseline}. As comparisons, we selected the Simple Rank method as a primary baseline. Additionally, we evaluated three Item Response Theory (IRT) based models: 1pl-irt, 2pl-irt, and multi-irt. Configuration details for these IRT models are provided in \autoref{Time_comp_irt_setting}. Simple Rank determines question difficulty based solely on the number of models that answered incorrectly, assigning higher difficulty to questions with higher error counts.


\textbf{EIP achieves high alignment (\textbf{90\%}) with human consensus.}
As shown in \autoref{tab:human_alignment}, EIP achieves superior agreement with human judgments for 18 out of 20 evaluators compared to Simple Rank, with an average margin of +9.3\% in individual alignment scores. The "Consensus" column in \autoref{tab:human_alignment} aggregates information via majority vote and resolves near-ties. Overall, EIP achieves 90\% agreement with the human consensus, highlighting its strong alignment with human judgment.

\textbf{A more diverse model pool improves human alignment in difficulty judgments.} As shown in \autoref{fig:human_alignment_by_group}, the whole group, comprising models with various sizes, achieves a consensus alignment of 90.0\% with human judgments on question difficulty, along with a Cohen coefficient of $\kappa$ of 0.80. In contrast, the homogeneous size groups from the SOTA large model to the 0.5B small models exhibit markedly lower consensus (38.6\%--64.3\%) and weaker agreement levels ($\kappa = 0.13$--$0.29$). This strongly suggests that diversity in model size leads to richer and more robust difficulty assessments, reducing the bias often observed when the annotator pool is restricted to a single-size scale.

\vspace{-5pt}
\subsection{Efficiency and Scalability Analysis of EIP}
\label{subsec:efficiency_scalability}
\vspace{-5pt}

Given the rapidly growing number of models ($M$) and extensive question sets ($Q$) used in benchmarks, computational efficiency and scalability of the evaluation framework are paramount. We therefore analyze these aspects of EIP both theoretically and empirically.

\textbf{EIP demonstrates strong computational efficiency and scalability.}
EIP has time complexity $O(tQM)$ where $t$ is the number of iterations required for convergence (proof in \autoref{sec:time_complexity_analysis}). The iteration count $t$ remains stable with preset damping factor $\alpha$ and tolerance $\epsilon$, independent of $Q$ or $M$ (\autoref{convergence_proof}). Since $M \ll Q$ in typical LLM evaluation scenarios, the per-iteration cost $O(QM)$ remains manageable for large-scale applications.

\begin{wraptable}{r}{0.45\textwidth}  
\vspace{-10pt}
\centering

\caption{Convergence speed of EIP on consumer-grade hardware (Intel i7).}
\label{tab:convergence_speed}
\small
\vspace{-5pt}
\begin{tabular}{lr}
\toprule
\textbf{Method}          & \textbf{Convergence Time (s)} \\
\midrule
EIP         & \textbf{0.00597 }     \\
1PL IRT         & 1782.75      \\
2PL IRT         & 3787.03      \\
Multi-IRT (3D)  & 18.76        \\
\bottomrule
\end{tabular}
\vspace{-10pt}
\end{wraptable}

Empirical validations further underscore EIP's efficiency and scalability. In a direct comparison on a dataset of 30 LLMs and 35,550 questions, EIP achieved convergence substantially faster (in $0.00597$ seconds) than all Item Response Theory (IRT) baselines, including 1PL-IRT, 2PL-IRT, and Multi-IRT (\autoref{tab:convergence_speed}). Scalability was further assessed using synthetic response matrices with $Q$ ranging up to 1,000,000 and $M$ up to 2,000. Across all these settings, EIP consistently converged in a constant number of iterations (9 in our experiments), and its average per-iteration time increased linearly with the total number of interactions $Q \times M$, aligning with its theoretical complexity (\autoref{tab:scalability_experiment}).

These combined theoretical and empirical results confirm that EIP not only converges quickly but also scales efficiently with the size of the evaluation. Given the low computational overhead, deployment costs are negligible even for extreme-scale evaluations. On AWS EC2 (approximately \$0.05 per vCPU-hour), EIP enables\textbf{ cost-efficient deployment for large-scale LLM evaluation}. We have established a Hugging Face leaderboard platform that supports continuous updates of new models and benchmarks, capable of handling hundreds of daily updates while maintaining cost efficiency. This makes EIP a practical and robust solution for ranking large numbers of language models across extensive question sets.

\begin{table}[h!]
\centering
\small
\vspace{-5pt}
\caption{EIP convergence time under varying numbers of models and questions.}
\vspace{-7pt}
\label{tab:scalability_experiment}
\resizebox{\textwidth}{!}{ 
\rowcolors{2}{gray!20}{white}
\begin{tabular}{lccccc}
\toprule[1pt]
\textbf{Questions ($Q$)} & \textbf{Models ($M$)} & \textbf{$Q \times M$} & \textbf{Iterations ($T$)} & \textbf{Total Time (s)} & \textbf{Avg. Time / Iter. (s)} \\
\midrule
1,000,000 & 2,000 & $2.00 \times 10^9$ & 9 & 5.28 & 0.5867 \\
1,000,000 & 1,000 & $1.00 \times 10^9$ & 9 & 3.14 & 0.3489 \\
500,000 & 1,000 & $0.50 \times 10^9$ & 9 & 1.93 & 0.2144 \\
1,000,000 & 500 & $0.50 \times 10^9$ & 9 & 2.18 & 0.2422 \\
500,000 & 500 & $0.25 \times 10^9$ & 9 & 1.13 & 0.1256 \\
500,000 & 250 & $0.125 \times 10^9$ & 9 & 0.64 & 0.0711 \\
250,000 & 500 & $0.125 \times 10^9$ & 9 & 0.52 & 0.0578 \\
250,000 & 250 & $0.0625 \times 10^9$ & 9 & 0.30 & 0.0333 \\
\bottomrule[1pt]
\end{tabular}
}
\vspace{-13.5pt}
\end{table}

\subsection{Robustness and Extensibility Analysis of EIP}
\label{subsec:Robustness_Analysis_Main}

Given the rapid pace at which new models are developed and integrated into evaluation pipelines, it is crucial that evaluation systems remain both stable and extensible. To this end, we examine the robustness and extensibility of EIP under dynamic model pool configurations, ensuring that its scores remain consistent even as models are added. To simulate the introduction of new models, we emulate pool variation by randomly removing $k$ models (ranging from 1 to 15) from the original set of 30 models. For each value of $k$, we conducted 50 independent trials and computed the mean and standard deviation of the Spearman correlation $\rho$ between the resulting scores and those obtained with the full model pool.

\textbf{Even under large-scale modifications to the model pool, EIP preserves stable rankings of both questions and models.}
This is substantiated by the results in \autoref{tab:random_removal_robustness} and visualized in \autoref{fig:robustness_random_removal}. Even when half of the model pool (15 out of 30 models) was removed---simulating a scenario where a large influx of new models suddenly enters the system---the mean $\rho$ for question difficulty scores remained high at $0.9382$ when compared to rankings from the full pool. Model competency rankings exhibited even greater resilience under the same 15-model removal condition. These findings indicate that EIP's relative assessments of both questions and models are largely preserved despite considerable variations in the evaluation pool's size. Concurrently, a significant practical benefit observed was the reduction in computation time, demonstrating improved efficiency with smaller model sets without a critical loss in ranking fidelity. For other detailed experiments and observations such as the removal of datasets, please refer to \autoref{app: robustness_ana}.

\begin{table}[h!]

\centering
\small
\caption{Impact of randomly removing $k$ models on EIP's stability and computation time (averaged over 50 trials).}

\label{tab:random_removal_robustness}
\begin{tabular}{c S[table-format=1.4] S[table-format=1.4] S[table-format=1.4] S[table-format=1.4] S[table-format=1.4] S[table-format=2.2]}
\toprule
\textbf{Models} & \multicolumn{2}{c}{\textbf{Question Correlation}} & \multicolumn{2}{c}{\textbf{Model Correlation}} & \multicolumn{2}{c}{\textbf{Computation Time}} \\
\cmidrule(lr){2-3} \cmidrule(lr){4-5} \cmidrule(lr){6-7}
\textbf{Removed ($k$)} & {Mean $\rho$} & {SD} & {Mean $\rho$} & {SD} & {Avg. Time (s)} & {\% Reduction} \\
\midrule
1  & 0.9978 & 0.0021 & 0.9997 & 0.0002 & 0.0079 & 14.09 \\
\rowcolor{pink_table} 5  & 0.9850 & 0.0073 & 0.9988 & 0.0009 & 0.0080 & 13.57 \\
10 & 0.9684 & 0.0107 & 0.9977 & 0.0019 & 0.0066 & 28.28 \\
\rowcolor{pink_table} 15 & 0.9382 & 0.0187 & 0.9934 & 0.0069 & 0.0062 & 33.02 \\
\bottomrule
\end{tabular}
\vspace{-15pt}
\end{table}

\section{Case Study: Validating Difficulty-Based Evaluation}
\label{Sec:case_study}
\vspace{-10pt}
Due to the lack of ground truth in large-scale evaluations, identifying model capabilities on difficult questions is challenging. We address this through a controlled simulation where model performance is defined by question difficulty, enabling clear assessment of how different evaluation methods capture performance differences across difficulty levels.

\textbf{Simulation Setup.}
We created five hypothetical models (M1--M5) and 100 questions (70 easy, 21 medium, 9 hard) to assess whether evaluation methods can detect subtle performance differences among models with similar overall accuracy. Models were configured to establish ground truth ranking: M1 > M2 > M4 > M5 > M3. M1/2 had equal accuracy, but M1 answered more hard questions. M4/5 had equal accuracy, but M4 performed better on medium-difficulty items.

\textbf{Baseline.}
We applied EIP alongside baseline methods to simulated responses: \textbf{1) Standard accuracy:} Raw percentage of correct answers. \textbf{2) Dataset-difficulty-weighted accuracy:} Weighted average of accuracies on "Easy" and "Hard" subsets, with weight $w_d = 1 - \bar{a}_d$ for dataset $d$, where $\bar{a}_d$ is average accuracy of all models on that dataset. \textbf{3) Item Response Theory (\citep{van2018IRT001}):} 1PL, 2PL, and multidimensional IRT models. Configuration details are in \autoref{Time_comp_irt_setting}.

\begin{table*}[ht!]
\centering

\caption{Model rankings under different evaluation methods, with $\checkmark$ indicating agreement with the ground truth ranking.}
\vspace{-5pt}
\label{tab:case_study_comparison}
\resizebox{\textwidth}{!}{%
\begin{tabular}{@{} l c cc cc cc cc cc cc c @{}}
\toprule[1pt]
\textbf{Model} & \textbf{Solved Qs} & \multicolumn{2}{c}{\textbf{EIP}} & \multicolumn{2}{c}{\textbf{Accuracy}} & \multicolumn{2}{c}{\textbf{WeightedScore}} & \multicolumn{2}{c}{\textbf{1PL-IRT}} & \multicolumn{2}{c}{\textbf{2PL-IRT}} & \multicolumn{2}{c}{\textbf{Multi-IRT}} & \textbf{Truth} \\
\cmidrule(lr){3-4} \cmidrule(lr){5-6} \cmidrule(lr){7-8} \cmidrule(lr){9-10} \cmidrule(lr){11-12} \cmidrule(lr){13-14}
& & \textbf{Score} & \textbf{Rank} & \textbf{Score} & \textbf{Rank} & \textbf{Score} & \textbf{Rank} & \textbf{Score} & \textbf{Rank} & \textbf{Score} & \textbf{Rank} & \textbf{Score} & \textbf{Rank} & \textbf{Rank} \\
\midrule
M1 & 70/10/5 & 100.00 & 1 & 85 & 1 & 77.01 & 2 & 100.00 & 1 & 100.00 & 1 & 38.91 & 3 & 1 \\
M2 & 70/11/4 & 95.50 & 2 & 85 & 1 & 78.79 & 1 & 100.00 & 1 & 84.14 & 2 & 0.00 & 5 & 2 \\
M3 & 47/13/0 & 56.88 & 5 & 60 & 5 & 58.86 & 3 & 0.00 & 5 & 19.66 & 3 & 95.30 & 2 & 5 \\
M4 & 46/15/0 & 59.85 & 3 & 61 & 3 & 50.28 & 5 & 3.02 & 3 & 0.00 & 5 & 100.00 & 1 & 3 \\
M5 & 47/14/0 & 58.21 & 4 & 61 & 3 & 58.86 & 3 & 3.02 & 3 & 19.30 & 4 & 16.52 & 4 & 4 \\
\midrule
\multicolumn{2}{@{}l}{M1>M2 Correct?} & \multicolumn{2}{c}{\cmark} & \multicolumn{2}{c}{\xmark} & \multicolumn{2}{c}{\xmark} & \multicolumn{2}{c}{\xmark} & \multicolumn{2}{c}{\cmark} & \multicolumn{2}{c}{\xmark} & \\
\multicolumn{2}{@{}l}{M4>M5>M3 Correct?} & \multicolumn{2}{c}{\cmark} & \multicolumn{2}{c}{\xmark} & \multicolumn{2}{c}{\xmark} & \multicolumn{2}{c}{\cmark} & \multicolumn{2}{c}{\xmark} & \multicolumn{2}{c}{\xmark} & \\
\bottomrule[1pt]
\end{tabular}
}

\end{table*}

\begin{wraptable}{r}{0.39\textwidth}
\vspace{-10pt}
\centering
\small
\setlength{\tabcolsep}{5pt}
\caption{Difficulty scores of simulated questions calculated by EIP.}
\vspace{-8pt}
\label{tab:difficulty_stats}
\begin{tabular}{lccc}
\toprule[1pt]
\textbf{Category} & \textbf{Quantity} & \textbf{Mean} & $\sigma$ \\
\midrule
Easy & 70 & 18.24 & 0.45 \\
Medium & 21 & 73.44 & 1.56 \\
Hard & 9 & 98.59 & 1.34 \\
\bottomrule[1pt]
\end{tabular}
\vspace{-2pt}
\end{wraptable}

Only EIP achieved rankings consistent with ground truth (\autoref{tab:case_study_comparison}). EIP distinguished M1 from M2 by capturing hard question performance differences and ordered M4 > M5 > M3 based on medium-difficulty distinctions. Baseline methods showed limitations: \textbf{Standard accuracy} treats all questions equally, failing to distinguish models with identical accuracy but different hard question competency. \textbf{Dataset-weighted accuracy} uses coarse-grained weighting that ignores intra-dataset variation, allowing models to score high by answering easier items within "hard" sets. \textbf{IRT-based methods} (1PL, 2PL, Multi-IRT) showed inconsistent rankings, with Multi-IRT diverging significantly from ground truth.

\textbf{Analysis of Simulation Results.}

These findings highlight EIP's unique ability to capture fine-grained performance differences through instance-level difficulty estimation, while conventional methods fail due to uniform weighting assumptions, aggregation bias, or insufficient data robustness.

\vspace{-5pt}
\section{Conclusion}
\vspace{-5pt}
We introduced EIP, a difficulty-aware framework for evaluating large language models. By jointly modeling question difficulty and model competence, EIP provides more fine-grained insights than traditional accuracy-based metrics. It achieves 90\% agreement with human difficulty judgments, surpasses IRT-based and heuristic baselines, and maintains stable rankings under model or dataset perturbations. Moreover, it converges efficiently in large-scale settings. These results establish EIP as a scalable, robust, and human-aligned alternative for LLM evaluation.

\section{Acknowledgement}
We thank Hanchi Sun for mathematical support, including insights on the random-walk formulation and help with the derivations and proofs underlying our damped bidirectional score propagation mechanism. We thank Yuan Guo for early involvement in the project and for contributions to the initial exploration and prototyping that informed this work. We also thank Guanshen Meng for assistance with the visual design of our figures, including color palette refinement and aesthetic improvements. This work is also supported by the Delta/DeltaAI systems at NCSA and the Bridges-2 system at PSC through allocation CIS240308 from the Advanced Cyberinfrastructure Coordination Ecosystem: Services \& Support (ACCESS) program, supported by National Science Foundation grants 2138259, 2138286, 2138307, 2137603, and 2138296.

\newpage
\section*{Reproducibility Statement}
We are committed to ensuring that our results can be independently verified and extended. All source code and prompt templates used in our experiments are included in the supplementary materials. The repository includes: (i) a step-by-step README describing the environment setup and execution commands; (ii) hosted links to all model version and formatted benchmark datasets; and (iii) configuration files specifying random seeds, hyper-parameters, and compute resources. Human-evaluation protocols are likewise included. Together, these artifacts enable an end-to-end replication of our experiments on any machine with standard GPU resources.

\section*{Ethics Statement}
EIP is an evaluation framework and does not generate new user-facing content. All benchmarks employed are publicly available and distributed under permissive licenses. No personally identifiable or sensitive data are processed. Nonetheless, difficulty-aware rankings could conceivably be misused to dismiss models that prioritize safety or fairness over raw competence. We therefore release our code under an open license and encourage responsible adoption that complements, rather than replaces, broader alignment assessments. We disclose all model weights evaluated and provide guidelines for reproducing the study without circumventing dataset usage terms.

\newpage

\IfFileExists{EIP_ICLR2026/iclr2026_conference.bib}{%
\bibliographystyle{EIP_ICLR2026/iclr2026_conference}
\bibliography{EIP_ICLR2026/iclr2026_conference}
}{%
\bibliographystyle{iclr2026_conference}
\bibliography{iclr2026_conference}
}

\newpage
\appendix
\section{Appendix}

\startcontents[apptoc]
\printcontents[apptoc]{}{1}{\section*{Appendix Contents}}

\clearpage

\section{Related Works}
\textbf{Benchmarking and evaluating LLMs.} Owing to the advanced capabilities of LLMs, sophisticated benchmarking is essential to comprehensively assess their performance in both general and specialized domains \citep{Chang2024}. The evaluation of LLMs spans multiple fields, beginning with traditional core natural language processing (NLP) tasks such as sentiment analysis \citep{Lopezlira2023, zhang2023}, text classification \citep{yang2024accuracypoliticalbiasnews, zhang2024generationdrivencontrastiveselftrainingzeroshot}, and natural language inference \citep{mckenna2023sourceshallucinationlargelanguage}. In recent years, the development of diverse benchmarks has significantly expanded aspects of evaluating LLMs, enabling more comprehensive assessments of their capabilities. For example, the MMLU \citep{hendrycks2021MMLU} and MMLU-Pro \citep{wang2024MMLU-Pro} assess models on multi-disciplinary language understanding and reasoning, and MMLU-Reason further benchmarks multi-task multi-modal language understanding and reasoning \citep{tie2025mmlureason}, while Big-Bench Hard (BBH) \citep{suzgun2022BBH} evaluates their multi-step reasoning and algorithmic reasoning abilities. The GSM8k \citep{cobbe2021GSM8K} benchmark tests performance on grade-school-level mathematics, and HumanEval \citep{chen2021HumanEval} measures a model's capability in algorithmic reasoning. AlignBench is designed to evaluate the alignment of Chinese LLMs \citep{liu2023alignbench, liu-etal-2024-alignbench}, and HellaSwag \citep{zellers2019HellaSwag} focuses on commonsense reasoning by requiring LLMs to select the most plausible continuation of a given context. \citet{guo2023can} and \citet{huang2024social} assess the abilities of LLMs in science domains, and recent benchmarks also extend evaluation to medicine with contamination-aware protocols and automated rubrics \citep{yan2026livemedbench} and to structure-based drug design with systematic trade-offs across representations \citep{zheng2026beyond}. With the development of agentic AI, there are many benchmarks designed to evaluate the agent-related capabilities of LLMs \citep{liu2023agentbench, chen2024scienceagentbench, chen2024gui, huang2023metatool, huang2024metatool}, and robust evaluation beyond memorization has also been studied for vision-language-action models \citep{zhou2025liberopro}. Some benchmarks are also focusing on evaluating the trustworthiness of LLMs \citep{huang2024trustllm, wang2023decodingtrust, gaohonestllm, liu2025mm, huang2023trustgpt, huang2024position}, and benchmarking unlearning is another complementary direction for trustworthy evaluation \citep{mavlunlearning}. Besides capability evaluation, some recent works propose many evaluation paradigms, including multimodal judge-based evaluation \citep{chen2024mllm} and broader perspectives on AI scientist systems that motivate evaluating complex agentic workflows \citep{survey2025ai}. For instance, flexible protocols for dynamic evaluation have
been advanced, exemplified by the recent initiatives DyVal \citep{zhu2023dyval} and DyVal 2 \citep{zhu2024dyval}. \citet{wu2024unigen} and \citet{bao2024autobench} both propose a dynamic evaluation framework powered by generative models. Beyond dynamic protocols, fluid and continuously updated benchmarks have been proposed to better track distribution shift and rapid model progress \citep{hofmann2025fluid}. Psychometric alternatives based on adaptive testing further aim to reduce evaluation cost while maintaining measurement fidelity \citep{li2025atlas}, and generative computerized adaptive testing extends this direction by generating or selecting items online \citep{feng2026gencat}. Recent IRT-based analyses revisit benchmark conclusions by explicitly modeling item parameters \citep{zhou2026lostinbenchmarks}. In preference-based evaluation, rankings can be highly sensitive to small perturbations of comparison sets \citep{huang2026dropping}, motivating more robust inference from preference data \citep{frauen2026nonparametric} and reliability diagnosis of LLM-as-a-judge using IRT \citep{choi2026diagnosing}. Finally, task-specific benchmarks extend evaluation to structured settings such as optimization modeling with graph-theoretic scoring \citep{wang2026orgeval}, graph-structured diagram generation \citep{liang2025evaluating}, and open-source game environments \citep{sistla2025evaluating}.

\textbf{Difficulty measuring.} Most previous works rely on human effort to classify question difficulty levels. For example, in benchmarks like MATH \citep{hendrycks2021math} and GPQA \citep{rein2023gpqa}, human experts annotate questions into predefined difficulty tiers. LingOly \citep{LingOly}, a linguistic reasoning benchmark, categorizes question difficulty into five levels by considering both semantic similarity to English and the complexity of required reasoning. The labeling process is time-consuming and lacks scalability. Unlike human labeling, OlympicArena \citep{OlympicArena} employs GPT-4V as an annotator to categorize question difficulty levels. MedConceptsQA \citep{MedConceptsQA}, on the other hand, determines difficulty levels based on the relative distances among answer choices within an undirected graph built by medical code. However, these approaches either introduce bias in difficulty assessment due to reliance on a single model or are domain-specific, limiting their generalizability to other fields. Recent work also estimates question difficulty from model-internal signals, e.g., hidden representations capturing LLM-perceived difficulty \citep{zhu2025llmalreadyknows}. Complementarily, difficulty can be inferred without ground truth by leveraging comparative signals across LLM outputs \citep{ballon2025estimating}.
Recent work, Easy2Hard-Bench \citep{Easy2Hard-Bench}, applied two independent methods to quantify question difficulty: Item Response Theory (IRT) \citep{lord2008statistical} and Glicko-2 \citep{glickman2012example}. Unlike IRT, which assumes static difficulty parameters, EIP dynamically adjusts scores based on model performance, providing a more adaptive and accurate representation of difficulty. In contrast to Glicko-2, which relies on pairwise matchups, EIP employs a bidirectional score propagation approach, allowing question difficulty to be inferred from a broader set of solver interactions rather than isolated `matches', which provides a dataset-wide estimate of difficulty.

\section{Proof of Convergence}
\label{convergence_proof}

\textbf{Convergence}. Let \( \Delta^{(t)} = \|\pi_Q^{(t)} - \pi_Q^{(t-1)}\|_1 + \|\pi_M^{(t)} - \pi_M^{(t-1)}\|_1 \) measure the change between iterations. The algorithm terminates when \( \Delta^{(t)} < \epsilon \) for predefined tolerance \( \epsilon \). For irreducible chains, convergence occurs at a geometric rate \( O(\alpha^t) \), typically requiring \( \log \frac{1}{\epsilon} \) iterations. An important consideration is the convergence speed in large-scale settings. In our experiments with 30 models and 35{,}550 questions, we find that about 9 iterations of the bipartite propagation process are sufficient to reach a stable distribution of difficulty and competency scores. Each iteration primarily involves matrix-vector multiplications on \(A\) and \(\hat{A}\), making the approach computationally feasible even with tens of thousands of questions. The relation between the damping factor \(\alpha\) and the speed of convergence is shown in \autoref{tab:deltas_by_damping_factor}.

\begin{table}[ht]
    \centering
    \caption{$\Delta$ for different damping factors ($\alpha$) across iterations, indicating the difference between the stationary distribution and convergence.}
    \label{tab:deltas_by_damping_factor}
    \resizebox{\textwidth}{!}{
    \begin{tabular}{@{}ccccccccccc@{}}
        \toprule[1pt]
        \textbf{Iteration} & $\alpha$=0.10 & $\alpha$=0.20 &$\alpha$=0.30 &$\alpha$=0.40 &$\alpha$=0.50 & $\alpha$=0.60 & $\alpha$=0.70 & $\alpha$=0.80 &$\alpha$=0.90 & $\alpha$=1\\
        \midrule
        0 & 8.39e-02 & 1.68e-01 & 2.52e-01 & 3.37e-01 & 4.22e-01 & 5.07e-01 & 5.93e-01 & 6.80e-01 & 7.67e-01 & 8.55e-01 \\
        1 & 1.86e-03 & 7.65e-03 & 1.77e-02 & 3.24e-02 & 5.20e-02 & 7.70e-02 & 1.08e-01 & 1.44e-01 & 1.88e-01 & 2.38e-01 \\
        2 & 1.20e-06 & 2.00e-05 & 1.06e-04 & 3.48e-04 & 8.84e-04 & 1.91e-03 & 3.66e-03 & 6.49e-03 & 1.08e-02 & 1.70e-02 \\
        3 & 6.11e-10 & 4.14e-08 & 5.00e-07 & 2.97e-06 & 1.20e-05 & 3.76e-05 & 9.99e-05 & 2.34e-04 & 4.99e-04 & 9.84e-04 \\
        4 & 4.59e-13 & 1.19e-10 & 3.10e-09 & 3.17e-08 & 1.94e-07 & 8.54e-07 & 3.01e-06 & 9.02e-06 & 2.38e-05 & 5.70e-05 \\
        5 & Converge & 6.28e-13 & 3.69e-11 & 6.63e-10 & 6.26e-09 & 3.92e-08 & 1.86e-07 & 7.16e-07 & 2.36e-06 & 6.85e-06 \\
        6 & Converge & Converge & 3.41e-13 & 1.10e-11 & 1.65e-10 & 1.50e-09 & 9.78e-09 & 4.97e-08 & 2.09e-07 & 7.58e-07 \\
        7 & Converge & Converge & Converge & 1.44e-13 & 3.33e-12 & 4.42e-11 & 3.94e-10 & 2.64e-09 & 1.41e-08 & 6.37e-08 \\
        8 & Converge & Converge & Converge & Converge & 5.38e-14 & 1.03e-12 & 1.27e-11 & 1.12e-10 & 7.67e-10 & 4.30e-09 \\
        9 & Converge & Converge & Converge & Converge & Converge & Converge & 3.27e-13 & 3.79e-12 & 3.32e-11 & 2.32e-10 \\
        10 & Converge & Converge & Converge & Converge & Converge & Converge & Converge & 1.74e-13 & 1.87e-12 & 1.58e-11 \\
        \bottomrule[1pt]
    \end{tabular}
    }
\end{table}

\begin{theorem}[Existence and Uniqueness of the Stationary Distribution]
\label{thm:existence-uniqueness}
Let \(P_{Q \to M} \in \mathbb{R}^{Q \times M}\) and $P_{M \to Q} \in \mathbb{R}^{M \times Q}$ be the row-stochastic matrices defined in \autoref{subsec:transition_matrices}. For any damping factor \(\alpha \in (0,1)\), the iterative process defined in \autoref{eq:iter_q} and \autoref{eq:iter_m} converges to a unique stationary distribution \((\pi_Q,\pi_M)\) satisfying

\begin{align}
\pi_M &= \alpha \,P_{Q \to M}^\top \pi_Q + (1-\alpha)\,\frac{\mathbf{1}_M}{M}, 
\label{eq:thm_pi_m}\\
\pi_Q &= \alpha \,P_{M \to Q}^\top \pi_M + (1-\alpha)\,\frac{\mathbf{1}_Q}{Q}.
\label{eq:thm_pi_q}
\end{align}
Furthermore, \(\pi_Q\) and \(\pi_M\) have strictly positive entries.
\end{theorem}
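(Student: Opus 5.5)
Substituting \autoref{eq:iter_q} into \autoref{eq:iter_m} reduces the coupled updates to a single affine map on the model scores. The plan is to show that this map is a contraction in the $\ell_1$ norm with constant $\alpha^2<1$. Banach's fixed point theorem then gives existence, uniqueness and geometric convergence. No irreducibility argument is needed, because the damping term supplies the contraction directly. Concretely, the first step is to write
\[
\pi_M^{(t+1)} = T(\pi_M^{(t)}) := \alpha^2 P_{Q\to M}^\top P_{M\to Q}^\top \pi_M^{(t)} + \alpha(1-\alpha) P_{Q\to M}^\top \tfrac{\mathbf{1}_Q}{Q} + (1-\alpha)\tfrac{\mathbf{1}_M}{M}.
\]
The question scores are then recovered as $\pi_Q^{(t+1)} = \alpha P_{M\to Q}^\top \pi_M^{(t)} + (1-\alpha)\mathbf{1}_Q/Q$. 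This recovery map is continuous, in fact $\alpha$-Lipschitz in $\ell_1$, so convergence of $\pi_M^{(t)}$ immediately yields convergence of $\pi_Q^{(t)}$.

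The key estimate is the standard one for a row-stochastic matrix $P$: for any vector $x$, $\|P^\top x\|_1 \le \sum_i |x_i| \sum_j P_{ij} = \|x\|_1$. The filtering step guarantees $0<S(q)<M$, and we assume $F(m)>0$, so both transition matrices are well defined and row-stochastic. The composite matrix $P_{Q\to M}^\top P_{M\to Q}^\top = (P_{M\to Q}P_{Q\to M})^\top$ is the transpose of the $M\times M$ row-stochastic matrix $P_{M\to Q}P_{Q\to M}$. Hence $\|T(x)-T(y)\|_1 \le \alpha^2\|x-y\|_1$. Since $\mathbb{R}^M$ with the $\ell_1$ norm is complete, $T$ has a unique fixed point $\pi_M^*$, and $\|\pi_M^{(t)}-\pi_M^*\|_1 \le \alpha^{2t}\|\pi_M^{(0)}-\pi_M^*\|_1$. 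This also justifies the $O(\alpha^t)$ rate mentioned above.

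Setting $\pi_Q^* = \alpha P_{M\to Q}^\top \pi_M^* + (1-\alpha)\mathbf{1}_Q/Q$, the fixed-point identity for $T$ becomes exactly \autoref{eq:thm_pi_m}, and \autoref{eq:thm_pi_q} holds by definition. For uniqueness, any pair satisfying both equations must have its $\pi_M$ component fixed by $T$, which forces it to equal $\pi_M^*$, and then $\pi_Q$ is forced as well. To check that these are genuine distributions, observe that $\mathbf{1}^\top P^\top x = \mathbf{1}^\top x$ for row-stochastic $P$. So if $\mathbf{1}^\top\pi_M=1$, the update gives $\mathbf{1}^\top\pi_Q = \alpha+(1-\alpha)=1$, and likewise $\mathbf{1}^\top\pi_M = 1$. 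The simplex is closed and invariant under $T$, so the fixed point lies in it. Nonnegativity is also preserved, since every matrix involved has nonnegative entries.

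Strict positivity follows directly. Because $\pi_M^*\ge 0$ and $P_{M\to Q}^\top\ge 0$, we get $\pi_Q^* \ge (1-\alpha)\mathbf{1}_Q/Q > 0$ entrywise, and similarly $\pi_M^* \ge (1-\alpha)\mathbf{1}_M/M>0$. The only delicate point is consistency with the paper's appeal to Perron--Frobenius and ergodicity. That route would build the $(Q+M)$-state damped chain and would have to argue that it is a positive (hence primitive) stochastic matrix, which is awkward because the update is Gauss--Seidel and the teleportation is split across the two sides. I expect reconciling that formulation to be the main obstacle, so I would avoid it and use the contraction argument above, noting only that it is equivalent to uniqueness of the stationary vector of the positive matrix $\alpha^2 (P_{M\to Q}P_{Q\to M})^\top + (\cdot)\mathbf{1}^\top$ on the simplex. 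Nothing in the argument uses binarity of $A$, so it carries over unchanged to the continuous extension with $A_c$.
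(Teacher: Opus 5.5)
Your proof is correct, and it takes a genuinely different route from the paper.

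\textbf{The paper's route.} The paper stacks $(\pi_Q,\pi_M)$ into one $(Q+M)$-vector and writes the update with the block anti-diagonal matrix $\mathcal{P}$ and teleportation vector $\mathbf{v}$. It then argues irreducibility and aperiodicity from the teleportation term and invokes Perron--Frobenius for existence, uniqueness and positivity. The geometric rate, and its independence from $Q$ and $M$, are asserted rather than derived.

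\textbf{Your route.} You eliminate $\pi_Q$ through the Gauss--Seidel substitution and use only the $\ell_1$ nonexpansiveness of $P^\top$ for row-stochastic $P$. Banach's theorem then applies to an $\alpha^2$-contraction on $\mathbb{R}^M$.

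\textbf{What your route buys.} It analyzes the iteration the algorithm actually runs. The paper's stacked update $\pi^{(t+1)}=\alpha\mathcal{P}^\top\pi^{(t)}+(1-\alpha)\mathbf{v}$ is a simultaneous, Jacobi-type sweep; it has the same fixed point but is not the scheme of the main text. Your argument also never needs $T=\alpha\mathcal{P}^\top+(1-\alpha)\mathbf{v}\mathbf{1}^\top$ to be a genuine stochastic matrix. As written it is not one, since $\mathbf{1}^\top\mathbf{v}=2$ gives column sums $2-\alpha$ until the state and $\mathbf{v}$ are rescaled by $1/2$. Further, your argument gives uniqueness among all real solutions and convergence from any initialization. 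It also gives the explicit, dimension-free bound $\|\pi_M^{(t)}-\pi_M^*\|_1\le 2\alpha^{2t}$ for simplex initializations, which is what actually justifies the paper's claim about the iteration count.

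\textbf{What the paper's route buys.} It gives the PageRank-style reading of the scores as the stationary law of a random walk on the bipartite graph. Your closing remark recovers this on the reduced chain. On the simplex, $T$ coincides with $\alpha^2(P_{M\to Q}P_{Q\to M})^\top+w\mathbf{1}_M^\top$, where $w=\alpha(1-\alpha)P_{Q\to M}^\top\mathbf{1}_Q/Q+(1-\alpha)\mathbf{1}_M/M$. This matrix has entries at least $(1-\alpha)/M$ and column sums $\alpha^2+(1-\alpha^2)=1$, so it is a positive column-stochastic matrix.

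\textbf{On your hypothesis.} You are right to state $F(m)>0$ explicitly. The question filter guarantees $0<S(q)<M$, but not that every model fails some retained question.
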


\begin{proof}[Proof sketch of \autoref{thm:existence-uniqueness}]
\textbf{Step 1: Reformulation as a Single Markov Chain.}  
The alternating updates between \(\pi_Q\) and \(\pi_M\) in \autoref{eq:thm_pi_m} and~\autoref{eq:thm_pi_q} can be unified into a single Markov chain on an augmented state space of size $Q + M$. Define the combined state vector \(\pi^{(t)} = \begin{bmatrix} \pi_Q^{(t)} \\ \pi_M^{(t)} \end{bmatrix}\). The iterative updates become:
\[
\pi^{(t+1)} = \alpha \mathcal{P}^\top \pi^{(t)} + (1-\alpha)\mathbf{v},
\]
where the block matrix \(\mathcal{P} = \begin{bmatrix} 0 & P_{Q \to M} \\ P_{M \to Q} & 0 \end{bmatrix}\) preserves the bipartite transitions, and \(\mathbf{v} = \begin{bmatrix} \frac{\mathbf{1}_Q}{Q} \\ \frac{\mathbf{1}_M}{M} \end{bmatrix}\) represents uniform teleportation.

\textbf{Step 2: Damped Markov chain interpretation.}  
The combined dynamics correspond to a Markov chain with a transition matrix:
\[
T = \alpha \mathcal{P}^\top + (1-\alpha)\mathbf{v} \mathbf{1}_{Q+M}^\top,
\]
where \(\mathbf{1}_{Q+M}\) is the all-ones vector. At each step, the chain either follows \(\mathcal{P}^\top\) (with probability \(\alpha\)) or restarts uniformly via \(\mathbf{v}\) (with probability \(1-\alpha\)), mirroring the damping mechanism in \autoref{eq:thm_pi_m}--\autoref{eq:thm_pi_q}.

\textbf{Step 3: Irreducibility and aperiodicity.}  
1) \textit{Irreducibility}: The teleportation term ensures every state can reach any other state in one step, as \((1-\alpha)\mathbf{v} > 0\) componentwise. 2) \textit{Aperiodicity}: Self-transitions occur with probability at least \((1-\alpha)\min\left(\frac{1}{Q}, \frac{1}{M}\right) > 0\), breaking periodicity inherent in the bipartite structure. Thus, \(T\) is irreducible and aperiodic.

\textbf{Step 4: Perron-Frobenius theorem application.}  
By the Perron-Frobenius theorem for irreducible and aperiodic Markov chains, \(T\) has a unique stationary distribution \(\pi = \begin{bmatrix} \pi_Q \\ \pi_M \end{bmatrix}\) with \(\pi > 0\), satisfying \(\pi = T^\top \pi\). Substituting \(T\) into this equation recovers the coupled updates in \autoref{eq:thm_pi_m}--\autoref{eq:thm_pi_q}, confirming \((\pi_Q, \pi_M)\) as the unique solution.
Crucially, convergence iteration \(t\) depends on the damping factor \(\alpha\) and the desired precision \(\epsilon\), but it is \textbf{independent of the scale of the problem, i.e., the number of questions \(Q\) or models \(M\)}.
\end{proof}

\noindent This proof rigorously establishes \autoref{thm:existence-uniqueness} using notation consistent with the main text. The use of \(\mathcal{P}\), \(\mathbf{v}\), and damping factor \(\alpha\) directly corresponds to the iterative updates in \autoref{eq:thm_pi_m}--\autoref{eq:thm_pi_q}, ensuring notational coherence.

\section{Time Complexity Analysis}
\label{sec:time_complexity_analysis}
In this section, we calculate and prove the time complexity of EIP in theory.

Let $Q$ be the number of questions and $M$ be the number of models. The Competency Matrix is denoted by $A \in \{0,1\}^{Q \times M}$, and the Difficulty Matrix (transposed) is $\hat{A} = (\mathbf{1}^{Q \times M} - A)^\top \in \{0,1\}^{M \times Q}$. The transition matrix $P_{Q \to M} \in \mathbb{R}^{Q \times M}$ is derived from $A$, and $P_{M \to Q} \in \mathbb{R}^{M \times Q}$ is derived from $\hat{A}$.

Each iteration of EIP primarily involves two sparse matrix-vector multiplications as defined in \autoref{eq:iter_q} and \autoref{eq:iter_m} of the main paper. The operation $P_{M \to Q}^\top \pi_M^{(t)}$ involves multiplying the transpose of $P_{M \to Q}$ (a $Q \times M$ sparse matrix) by the $M$-dimensional vector $\pi_M^{(t)}$, the computational cost of this product is proportional to the number of non-zero elements in $P_{M \to Q}$ plus the dimension of the output vector, i.e., $O(\text{nnz}(P_{M \to Q}) + Q)$. Similarly, the cost of the transposed product is $O(\text{nnz}(P_{Q \to M}) + M)$.

The sum of non-zero elements across both original transition matrices, $\text{nnz}(P_{Q \to M}) + \text{nnz}(P_{M \to Q})$, represents the total number of correct and incorrect answers, which sum to $QM$. Specifically, $\text{nnz}(P_{Q \to M})$ is the total number of entries where $A_{ij}=1$, and $\text{nnz}(P_{M \to Q})$ is the total number of entries where $\hat{A}_{ji}=1$ (i.e., $A_{ij}=0$).

Additional per-iteration operations, such as vector additions and scalar multiplications for incorporating the damping factor $\alpha$ and the uniform teleportation term, contribute $O(Q + M)$.

Combining these, the total complexity for one iteration is: $O(QM + Q + M)=O(QM)$

In typical scenarios the $QM$ term dominates $Q+M$. Therefore, the per-iteration complexity is effectively $O(QM)$. Thus, each iteration runs in $O(QM)$ time.
Combining the number of iterations $T$ and the per-iteration complexity, the overall time complexity of EIP is:

\begin{align}
        O(T \cdot (QM + Q + M)) = O(\log(1/\epsilon) \cdot (QM + Q + M))=O(tQM)
\end{align}

Where $\epsilon$ is the predefined tolerance value, and $t$ is the convergence iteration count.

\section{Robustness Analysis}
\label{app: robustness_ana}

\subsection{Robustness to Single/Group Model Addition}
\label{subsec:stability_model_addition}

To assess EIP's ranking stability as the model pool composition evolves, simulating the integration of new models, we analyze its performance under perturbations. We consider scenarios analogous to incorporating a single new model or a small group of 5 new models into an existing evaluation set. This is achieved by examining the Spearman correlation ($\rho$) of question difficulty and model competency scores derived from reduced model pools (N-1 models via Leave-One-Out, and N-5 models via random subset removal) against the scores from the original full pool of N models. High correlations indicate that EIP's assessments remain consistent. The statistics for the N-5 scenario reflect averages over 10 trials. Results are summarized in \autoref{tab:loo_vs_random5_revised}.

\begin{table}[h!]
\centering
\caption{Comparison of EIP robustness: Leave-One-Out (LOO) vs.\ Random Removal of 5 Models. LOO was conducted for each model in the pool (30 trials). Random removal was conducted for 10 trials, each removing 5 models. Values are averaged Spearman correlation ($\rho$) over trials.}
\label{tab:loo_vs_random5_revised}
\begin{tabular}{@{}lcc@{}}
\toprule
& \textbf{LOO (Remove 1)} & \textbf{Random Removal (Remove 5)} \\
\midrule
\multicolumn{3}{@{}l}{\textbf{Question Difficulty Correlation}} \\
\quad Mean Spearman \(\rho\) & $0.9978 \pm 0.0024$ & $0.9863 \pm 0.0076$ \\
\quad Max Observed \(\rho\) Drop & 0.0100 & 0.0271 \\
\midrule
\multicolumn{3}{@{}l}{\textbf{Model Competency Correlation}} \\
\quad Mean Spearman \(\rho\) & 0.9998 & 0.9987 \\
\midrule
\multicolumn{3}{@{}l}{\textbf{Question Set Reduction}} \\
\quad Mean Questions Removed & 32.9 & 217.9 \\
\quad Max Questions Removed & 103 & 307 \\
\quad Mean \% Reduction & 0.09$\%$ & 0.62$\%$ \\
\bottomrule
\end{tabular}
\end{table}

The analysis presented in \autoref{tab:loo_vs_random5_revised} underscores EIP's robust stability when its model pool composition evolves, akin to integrating new models.

\textbf{Model Competency rank exhibited greater stability: }The mean Spearman correlation was exceptionally high, starting at 0.9997 (SD=0.0002) with one model removed and remaining at 0.9934 (SD=0.0069) even with 15 models removed. This highlights EIP's capability to maintain consistent relative model rankings despite considerable variations in the composition and size of the model evaluation pool.
A more intuitive visualization is shown in \autoref{fig:robustness_random_removal}.

\begin{figure}
    \centering
    \includegraphics[width=\linewidth]{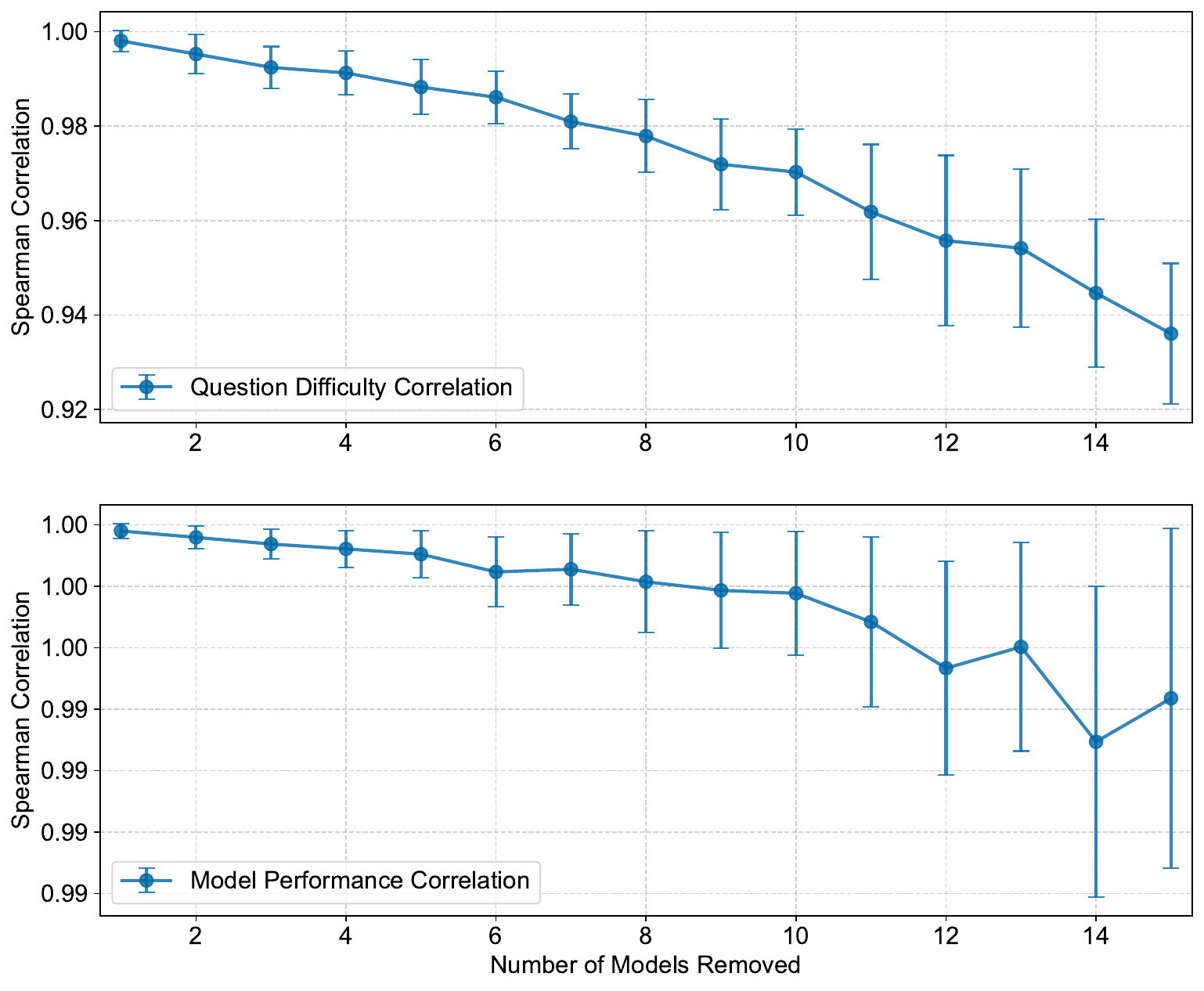}
    \caption{Impact of Random Model Subset Removal on EIP Stability. Top panel shows \(\rho\) for question difficulty, and the bottom panel shows \(\rho\) for model competency rank, both comparing results from the reduced model pool to the full model pool as $k$ models are removed. Error bars represent the standard deviation over multiple random removal trials for each $k$.}
    \label{fig:robustness_random_removal}
\end{figure}

\textbf{EIP maintains high consistency in question difficulty rankings despite significant model pool variations.}
When simulating the addition of a single model (N-1 pool correlated with the N-model pool), the Spearman correlation ($\rho$) for question difficulty remains exceptionally high at $0.9978 \pm 0.0024$. Even with a more substantial change, such as integrating a group of 5 models (N-5 pool correlated with N-model pool, a ~17\% pool size change), the correlation for question difficulty remains very strong at $0.9863 \pm 0.0076$. This indicates that the relative difficulty assessment of questions is largely preserved.

\textbf{Model pool perturbations cause minimal changes to the effective question set.}
EIP filters out questions universally solved or failed by the active model pool, and \autoref{tab:loo_vs_random5_revised} shows that adding new models (simulated via LOO or random removal) has limited effect on this filtering. Even when integrating five models, the average change was only 217.9 questions (about $0.62\%$ of the total), with a maximum of 307. This negligible impact ensures that nearly all questions continue contributing to the evaluation, and the exclusion of this small fraction does not disrupt the overall difficulty rankings, which remain highly stable.

When viewed from the perspective of an evolving model pool, the system demonstrates strong resilience, ensuring its evaluations are dependable as new models are incorporated into the assessment framework, with model competency rankings being particularly robust.

\subsection{Robustness to Dataset-Level Perturbation}
\label{subsec:dataset_perturbation}

To assess the influence of adding individual benchmark datasets on the final model rankings, we performed a leave-one-dataset-out analysis. The results are shown in \autoref{tab:dataset_loo}.

\begin{table}[h!]
\centering
\caption{Robustness of EIP model competency rankings to dataset removal. Spearman \(\rho\) correlation is calculated between rankings obtained after removing a dataset and the original rankings based on all datasets.}
\label{tab:dataset_loo}
\sisetup{table-format=1.4} 
\begin{tabular}{@{}l S[table-format=5.0] S@{}} 
\toprule
\textbf{Dataset Removed} & \textbf{{\# Questions}} & \textbf{{Model Rank Correlation (\(\rho\))}} \\
\midrule
HellaSwag & 10042 & 0.9604 \\
MMLU-Pro  & 12032 & 0.9626 \\
MATH      & 5000  & 0.9884 \\
GPQA      & 646   & 0.9973 \\
GSM8k     & 1319  & 0.9973 \\
BBH       & 6511  & 0.9973 \\
\midrule[0.08em] 
\multicolumn{2}{@{}l}{\textbf{Average Correlation}} & \bfseries 0.9839 \\ 
\multicolumn{2}{@{}l}{\textbf{Standard Deviation}} & 0.0162 \\
\bottomrule
\end{tabular}
\end{table}

\textbf{EIP model competency rankings exhibit high overall stability across all different dataset combinations.}
The high average Spearman correlation between model rankings derived from N-1 datasets and the full N-dataset ranking($0.9839\pm0.0162$) indicates that the relative model hierarchy established by EIP is largely consistent, even when any single dataset is newly introduced to or excluded from the evaluation pool. When simulating the addition of substantial datasets like MMLU-Pro ($12,032$ questions) or HellaSwag ($10,042$ questions) to the remaining pool, the model rank correlations were observed to be more perturbed compared to adding smaller datasets. However, the correlations in these scenarios ($0.9604$/$0.9626$) still indicate a strong preservation of the relative model ranking.

These findings suggest that EIP provides model evaluations that are robust to variations in the specific suite of benchmark datasets used, irrespective of whether a large, broad dataset or a smaller, more specialized one is being integrated. This consistency points to its utility in generating more generalized assessments of model capabilities, less dependent on the idiosyncrasies of individual datasets.

\section{Answer Distribution Across Difficulty}
\label{Appendix Answer Dis Across Difficulty}

As shown in \autoref{fig:pie_dis_correct_whole} (Highlighted in square box), a similar pattern emerged aligned with the case study in \autoref{Sec:case_study}: although GPT-4o answered slightly more questions correctly overall, Gemini-1.5-Pro outperformed it on difficult questions, achieving a 0.6\% higher accuracy rate in this category. Consequently, despite answering 2.9\% fewer medium-difficulty questions and achieving a lower overall accuracy rate, Gemini-1.5-Pro scored higher than GPT-4o under EIP, further emphasizing the impact of performance on high-difficulty questions. This real-world example corroborates the findings of our simulation, demonstrating EIP's ability to differentiate models based on their performance on challenging tasks.

\begin{figure}
    \centering
	    \includegraphics[width=1\linewidth]{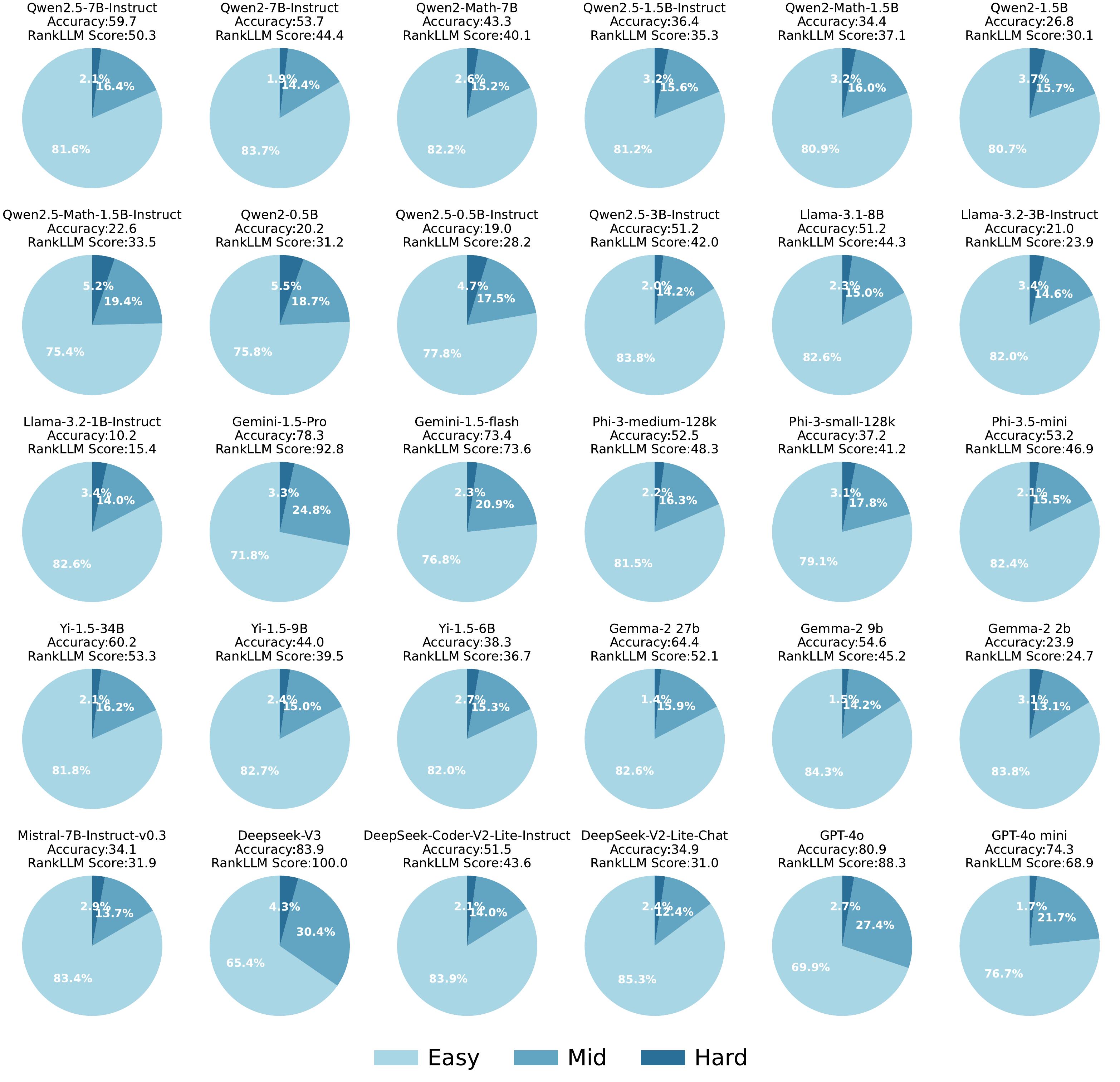}
    \caption{Distribution of correctly answered questions by models across the whole dataset (leave-one-out).}
    \label{fig:pie_dis_correct_whole}
\end{figure}

\section{Analysis on Correlation and Difference between EIP and Accuracy-Based Model Rankings}
\label{sec:correlation_study}

While EIP's competency scores exhibit a strong overall correlation with traditional accuracy-based rankings (Kendall's Tau \(\tau = 0.876\), $p < 0.001$, as detailed in \autoref{tab:model_ranking_metrics_summary}), a closer examination reveals significant and insightful differences in how individual models and groups of models are ordered. This section delves into these inter-model ranking variations, underscoring EIP's ability to provide a more nuanced perspective on model capabilities than accuracy alone.
\begin{table}[h!]
\centering
\caption{Summary of model ranking comparison metrics between EIP and accuracy.}
\label{tab:model_ranking_metrics_summary}
\begin{tabular}{@{}lr@{}}
\toprule
\textbf{Metric}                        & \textbf{Value}          \\ \midrule
Mean Absolute Rank Change     & 1.60           \\
Median Rank Change            & 1.00           \\
Max Rank Change               & 4.00           \\
Kendall's Tau Correlation     & 0.876(p<0.001)          \\
Rank-Biased Overlap (RBO)     & 0.896          \\
ICC1 (Absolute Agreement)     & 0.974          \\
\bottomrule
\end{tabular}
\end{table}
As summarized in \autoref{tab:model_ranking_metrics_summary}, the mean absolute rank change between the two methods is 1.60, with a median change of 1.0 rank position. Crucially, the maximum observed rank change for a model is 4 positions, indicating that for some models, the evaluation outcome under EIP can be substantially different from an accuracy-only assessment. The high Intraclass Correlation Coefficient (ICC1 = 0.974) suggests strong overall agreement in the rankings, yet the rank changes highlight the specific instances where EIP provides a distinct evaluation.

\begin{wrapfigure}{r}{0.45\textwidth}
    \vspace{-10pt}
    \centering
    \includegraphics[width=\linewidth]{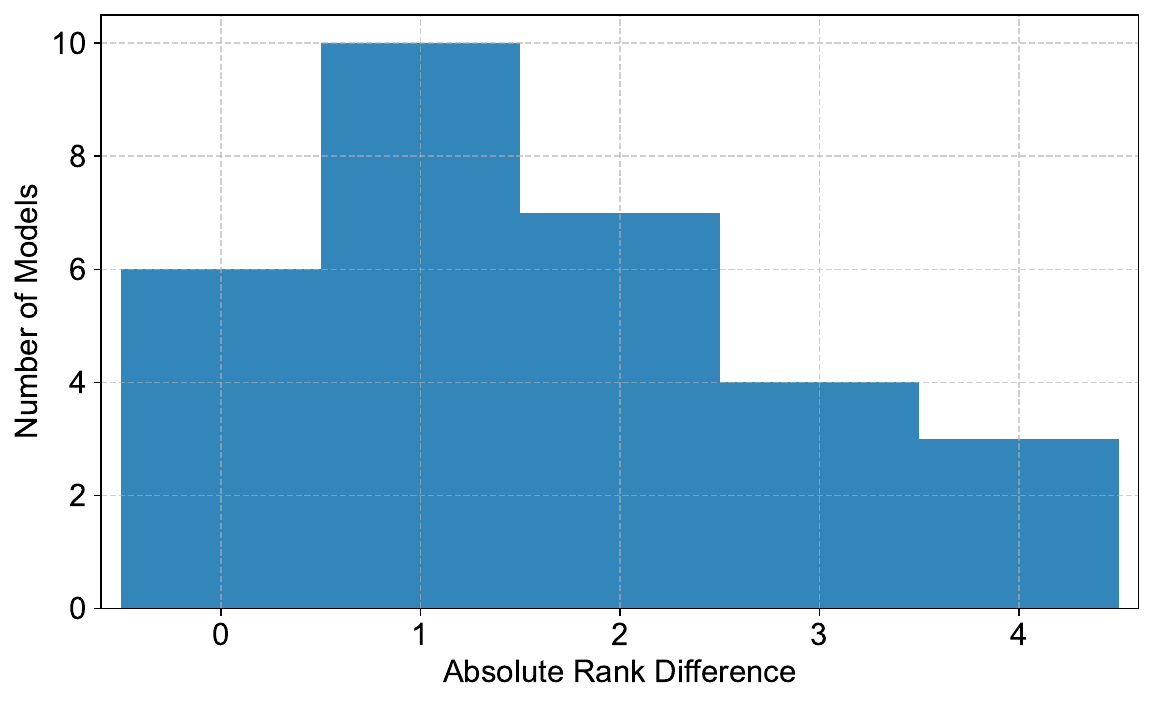}
    \vspace{-20pt}
    \caption{Absolute model rank difference distribution.}
    \label{fig:model_rank_difference_histogram_appendix}
    \vspace{-10pt}
\end{wrapfigure}
\subsection{Distribution of Model Rank Changes}
The distribution of absolute rank changes (see \autoref{fig:model_rank_difference_histogram_appendix}) reveals that while a majority of models experience small shifts (e.g., 6 models, or 20\%, have no rank change), a notable portion shift by 1 or more positions. These larger shifts are particularly interesting as they point to models whose performance on questions of varying difficulty disproportionately affects their EIP score compared to their simple accuracy.

\subsection{Local Rank Displacement within Accuracy-Defined Tiers}
To further investigate where these ranking differences are most pronounced, we analyzed the local rank displacement of models within windows defined by their accuracy ranks. \autoref{fig:w1_rank_disp}, \autoref{fig:w3_rank_disp}, \autoref{fig:w5_rank_disp}, and \autoref{fig:w10_rank_disp} illustrate the mean and maximum rank displacement for models when grouped into tiers by their accuracy ranking, using window sizes of 1, 3, 5, and 10 respectively.

\begin{figure}[htbp]
    \centering
    \begin{subfigure}[b]{0.48\textwidth}
        \centering
        \includegraphics[width=\textwidth]{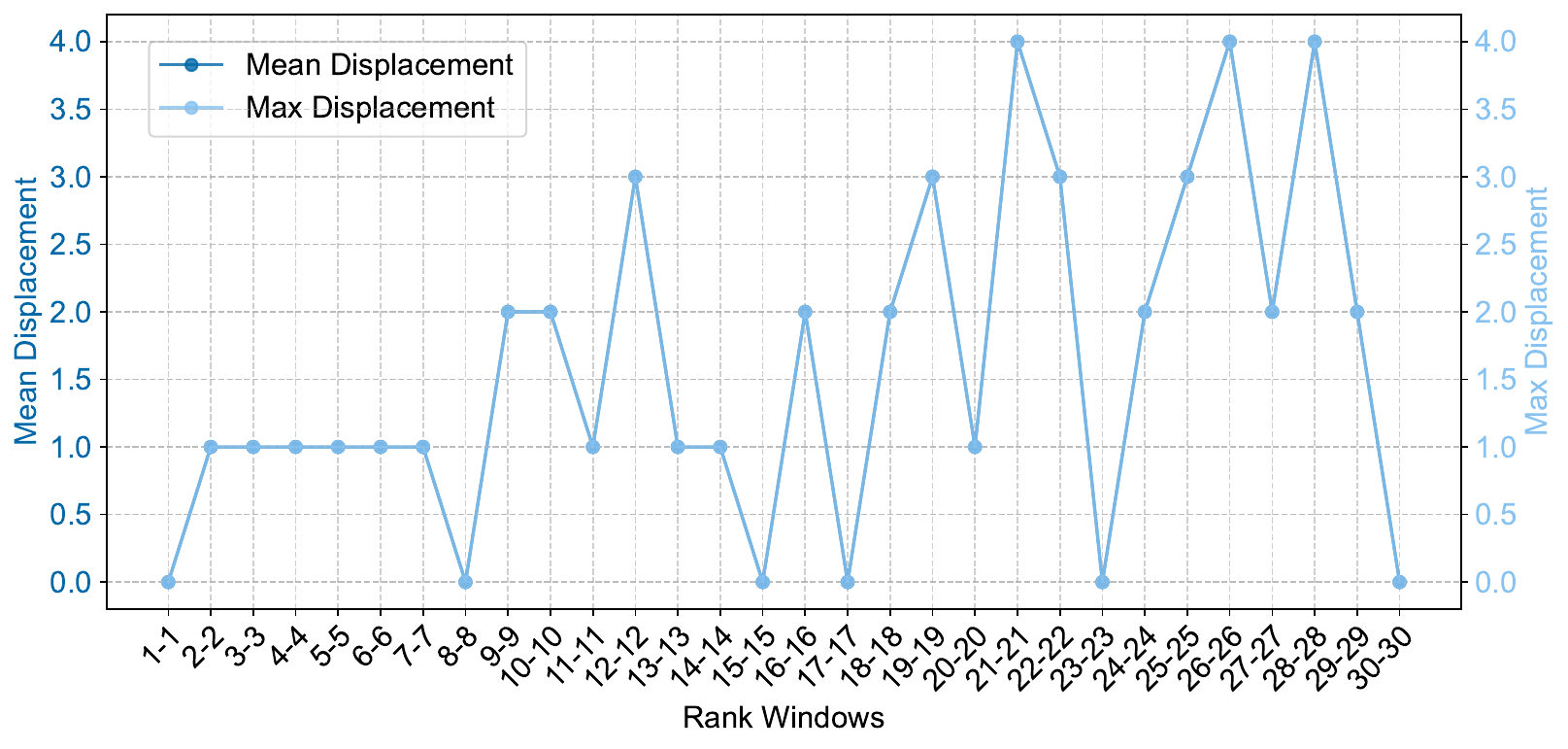} 
        \caption{Window Size = 1}
        \label{fig:w1_rank_disp}
    \end{subfigure}
    \hfill
    \begin{subfigure}[b]{0.48\textwidth}
        \centering
        \includegraphics[width=\textwidth]{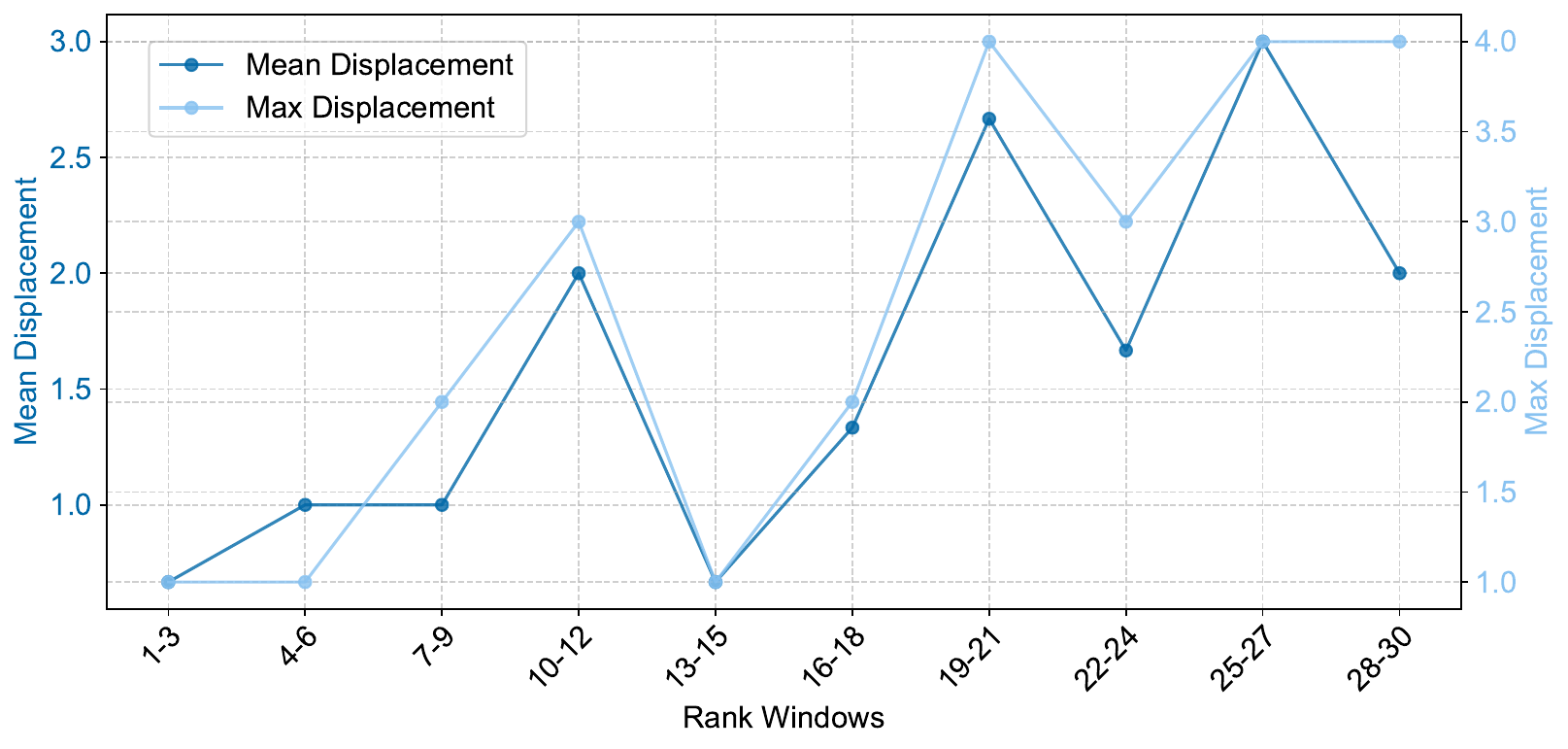} 
        \caption{Window Size = 3}
        \label{fig:w3_rank_disp}
    \end{subfigure}
    \vskip\baselineskip 
    \begin{subfigure}[b]{0.48\textwidth}
        \centering
        \includegraphics[width=\textwidth]{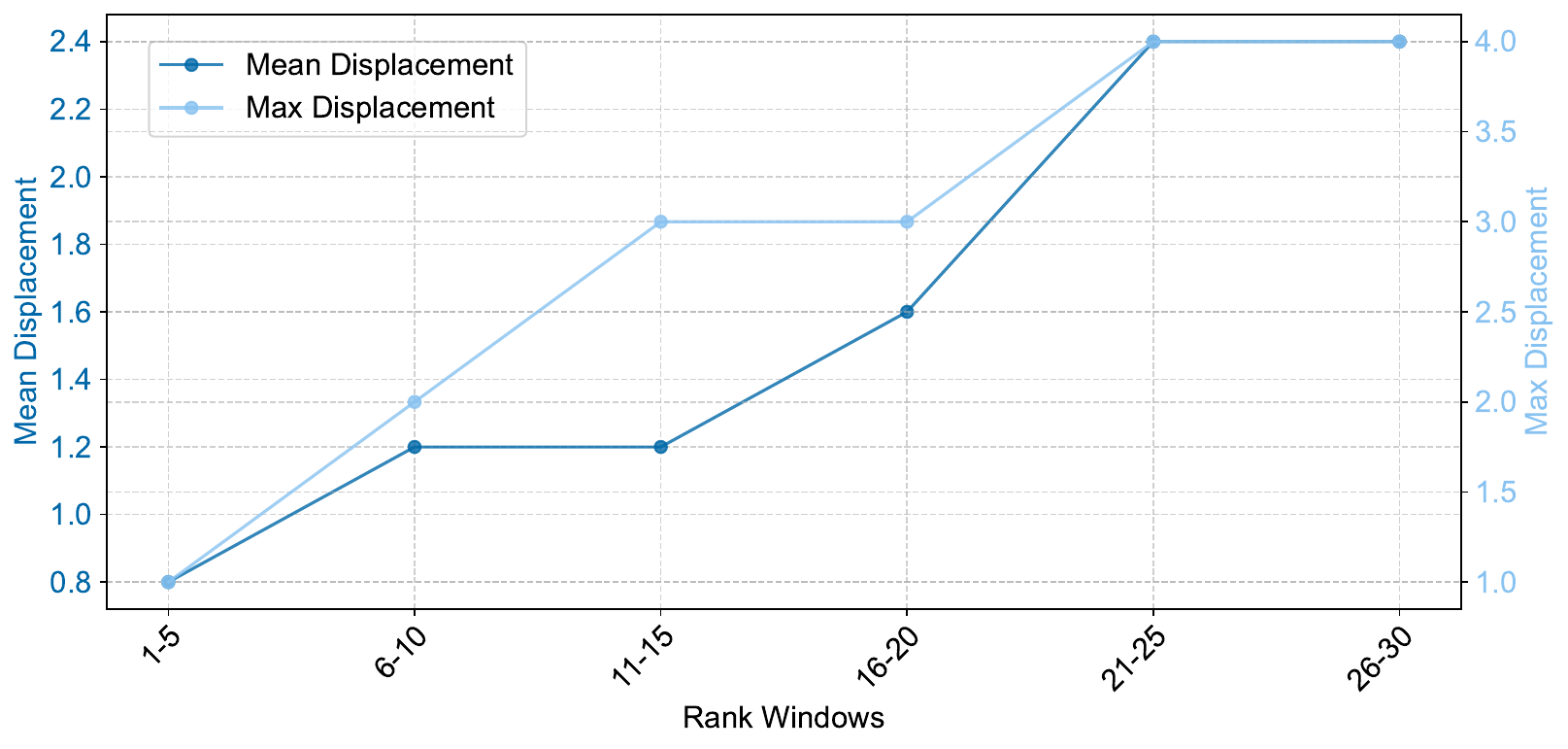} 
        \caption{Window Size = 5}
        \label{fig:w5_rank_disp}
    \end{subfigure}
    \hfill
    \begin{subfigure}[b]{0.48\textwidth}
        \centering
        \includegraphics[width=\textwidth]{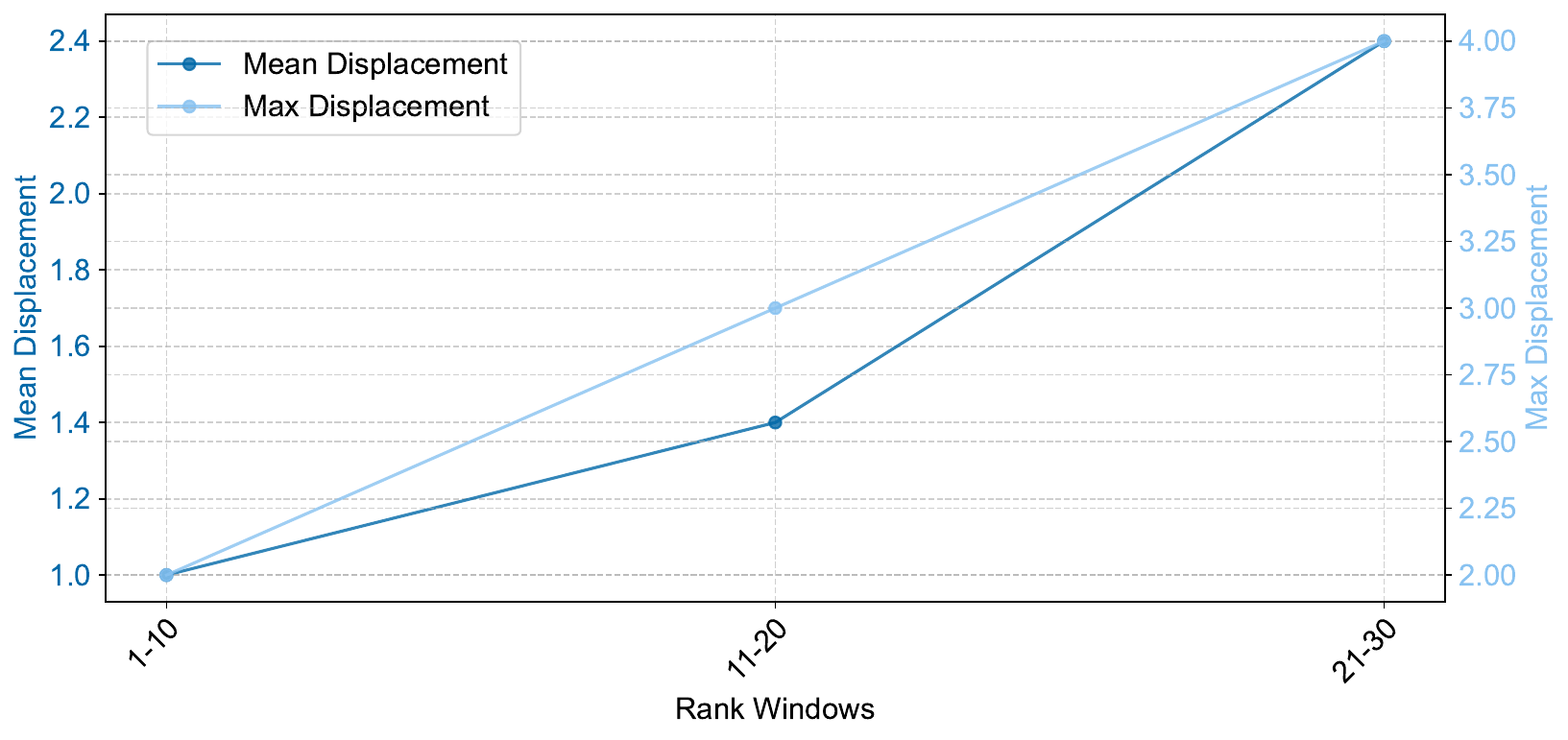}
        \caption{Window Size = 10}
        \label{fig:w10_rank_disp}
    \end{subfigure}
    \caption{Mean and Maximum Rank Displacement of EIP scores compared to Accuracy scores within different Accuracy-Rank based windows. Windows are defined by accuracy rank (e.g., "1-10" refers to models ranked 1st to 10th by accuracy).}
    \label{fig:all_windowed_rank_disp}
\end{figure}

The trends observed in \autoref{fig:all_windowed_rank_disp} indicate varying degrees of rank displacement across accuracy-defined tiers:

\textbf{Relatively High Stability in Top-Tier Model Rankings.}
For the top 10 models as ranked by accuracy (window "1-10" in \autoref{fig:w10_rank_disp}), the mean displacement in EIP rank is comparatively low at 1.0, with a maximum displacement of 2.0. This suggests that among the highest-performing models by accuracy, EIP's rankings largely concur, although some re-ordering still occurs. The Kendall's Tau correlation within this specific window is very high ($0.867, p < 0.001$), indicating strong local rank agreement.

\textbf{Increased Rank Volatility in Mid-Tier Models.}
The subsequent group of models, those ranked 11-20 by accuracy (window "11-20" in \autoref{fig:w10_rank_disp}), shows increased rank displacement when evaluated by EIP. The mean displacement rises to 1.4, and the maximum displacement observed within this tier reaches 3.0. The local Kendall's Tau correlation ($0.733, p \approx 0.002$) remains statistically significant but is lower than that of the top tier, reflecting more substantial re-shuffling by EIP.

\textbf{Pronounced Rank Displacement in Lower-Tier Models.}
Models in the lower third of the accuracy ranking (window "21-30" in \autoref{fig:w10_rank_disp}) exhibit the highest mean displacement (2.4) and the overall maximum observed displacement of 4.0 rank positions. The local Kendall's Tau correlation drops further to $0.467$ ($p \approx 0.043$), indicating considerably weaker local rank agreement between accuracy and EIP in this tier. This suggests that for models with lower overall accuracy, EIP's difficulty-weighting mechanism has a more pronounced effect, leading to more substantial re-rankings based on performance on hard questions versus reliance on easier ones.

\textbf{Individual Model Rank Fluctuations Illustrate Local Reordering.}
When examining displacements at the individual model level (window size 1, \autoref{fig:w1_rank_disp}), fluctuations are evident across the spectrum. For instance, the model ranked 2nd by accuracy is displaced by 1 position in EIP, the model ranked 9th by accuracy is displaced by 2 positions, and models ranked 21st, 26th, and 28th by accuracy are all displaced by 4 positions in their respective EIP rankings. This highlights that even when broader trends might align, EIP provides distinct local orderings based on its difficulty-aware assessment.

These windowed analyses demonstrate that while EIP broadly agrees with accuracy at the very top of the performance spectrum, its differentiation based on question difficulty becomes more apparent and impactful in the middle and lower tiers of accuracy rankings. This is where the ability to correctly answer challenging questions, or the failure to do so, most significantly revises a model's perceived competency compared to a simple count of correct answers. The maximum displacement of 4 rank positions underscores that EIP can lead to meaningfully different conclusions about relative model strengths, particularly for models with nuanced performance profiles across varying question difficulties.

\section{Experiment Details}
\label{app:model_details}

To ensure optimal performance and compatibility across our experiments, we employed a combination of advanced software libraries and frameworks, including Together AI 1.2.1, OpenAI 1.30.3, vLLM 0.5.5, FlashInfer 0.1.5+cu124, Flash-Attn 2.6.3, Torch 2.4.0, Google Generative AI 0.7.2, torch 2.5.1 and CUDA 12.4.

\subsection{Dataset \& Prompt.} We adhered to the standard configurations outlined in the respective original benchmarks used in our experiments. For the benchmarks BBH, MMLU-Pro, GSM8k, GPQA, MATH, and HellaSwag, we utilized 5-shot prompting combined with Chain of Thought (CoT) reasoning. These strategies were chosen to align with the original settings for each respective benchmark. Temperature was set to 0 to ensure reproducibility.

\subsection{Model Abbreviation.} For simplicity, we use some model abbreviations in figures and tables. Specifically, DeepSeek-Coder-Lite is DeepSeek-Coder-V2-Lite-Instruct, Qwen2.5-3B is Qwen2.5-3B-Instruct, Qwen2.5-1.5B is Qwen2.5-1.5B-Instruct, Qwen2.5-Math-1.5B is Qwen2.5-Math-1.5B-Instruct, Mistral-7B-v0.3 is Mistral-7B-Instruct-v0.3, DeepSeek-Chat-Lite is DeepSeek-V2-Lite-Chat, Qwen2.5-0.5B is Qwen2.5-0.5B-Instruct, Llama-3.2-3B is Llama-3.2-3B-Instruct, Llama-3.2-1B is Llama-3.2-1B-Instruct.
\begin{table*}[h!]
    \renewcommand{\arraystretch}{1.1}
    \caption{Information on the selected models used in EIP.}
    \vspace{0.5em}
    \centering
    \resizebox{\textwidth}{!}{
    \begin{tabular}{c c c c c c}  
        \toprule[1pt]
        \textbf{Model Name} & \textbf{Size} & \textbf{Developer} & \textbf{Version} & \textbf{Context Length} & \textbf{Open Weight} \\
        \hline
        \texttt{ChatGPT-4o \citep{openai2024gpt4ocard}} & N/A & OpenAI & Aug 2024 & 128K & \ding{55}\\
        \texttt{ChatGPT-4o-mini \citep{openai2024gpt4}} & N/A & OpenAI & July 2024 & 128K & \ding{55}\\
        \rowcolor{moonwhite}\texttt{DeepSeek-V3\citep{deepseekai2024deepseekv3technicalreport}} & 671B & DeepSeek & Dec 2024 & 64K & \ding{51}\\
        \rowcolor{moonwhite}\texttt{DeepSeek-Coder-V2-Lite-Instruct\citep{deepseekai2024deepseekcoderv2breakingbarrierclosedsource}} & 15.7B & DeepSeek & June 2024 & 32K & \ding{51}\\
        \rowcolor{moonwhite}\texttt{DeepSeek-V2-Lite-Chat\citep{deepseekai2024deepseekcoderv2breakingbarrierclosedsource}} & 15.7B & DeepSeek & May 2024 & 32K & \ding{51}\\
        \texttt{Gemini-1.5-Pro\citep{geminiteam2024gemini15unlockingmultimodal}} & N/A & Google & May 2024 & 2M & \ding{55}\\
        \texttt{Gemini-1.5-Flash\citep{geminiteam2024gemini15unlockingmultimodal}} & N/A & Google & May 2024 & 1M & \ding{55}\\
        \texttt{Gemma-2-27b-it\citep{gemmateam2024gemma2improvingopen}} & 27.2B & Google & June 2024 & 8K & \ding{51}\\
        \texttt{Gemma-2-9b-it\citep{gemmateam2024gemma2improvingopen}} & 9.B & Google & June 2024 & 8K & \ding{51}\\
        \texttt{Gemma-2-2b-it\citep{gemmateam2024gemma2improvingopen}} & 2.2B & Google & June 2024 & 8K & \ding{51}\\
        \rowcolor{moonwhite}\texttt{Llama-3.1-8B-Instruct\citep{grattafiori2024llama3herdmodels}} & 8.03B & Meta & July 2024 & 128K & \ding{51}\\
        \rowcolor{moonwhite}\texttt{Llama-3.2-3B-Instruct\citep{grattafiori2024llama3herdmodels}} & 3.21B & Meta & Sep. 2024 & 128K & \ding{51}\\
        \rowcolor{moonwhite}\texttt{Llama-3.2-1B-Instruct\citep{grattafiori2024llama3herdmodels}} & 1.24B & Meta & Sep. 2024 & 128K & \ding{51}\\
        \texttt{Mistral-7B-Instruct-v0.3\citep{jiang2023mistral7b}} & 7.25B & Mistral AI & May 2024 & 32K & \ding{51}\\
        \rowcolor{moonwhite}\texttt{Phi-3-medium-128k-instruct\citep{abdin2024phi3technicalreporthighly}} & 14B & Microsoft & May 2024 & 128K & \ding{51}\\
        \rowcolor{moonwhite}\texttt{Phi-3-small-128k-instruct\citep{abdin2024phi3technicalreporthighly}} & 7.39B & Microsoft & May 2024 & 128K & \ding{51}\\
        \rowcolor{moonwhite}\texttt{Phi-3.5-mini-128k-instruct\citep{abdin2024phi3technicalreporthighly}} & 3.82B & Microsoft & Aug 2024 & 128K & \ding{51}\\
        \texttt{Qwen2-7B-Instruct \citep{qwen2}} & 7.62B & Qwen & June 2024 & 131K & \ding{51}\\
        \texttt{Qwen2-Math-7B-Instruct \citep{qwen2}} & 7.62B & Qwen & June 2024 & 131K & \ding{51}\\
        \texttt{Qwen2-1.5B-Instruct \citep{qwen2}} & 1.54B & Qwen & June 2024 & 131K & \ding{51}\\
        \texttt{Qwen2-Math-1.5B-Instruct \citep{qwen2}} & 1.54B & Qwen & June 2024 & 131K & \ding{51}\\
        \texttt{Qwen2-0.5B-Instruct \citep{qwen2}} & 0.49B & Qwen & June 2024 & 32K & \ding{51}\\
        \texttt{Qwen2.5-7B-Instruct \citep{qwen2}} & 7.62B & Qwen & Sep. 2024 & 131K & \ding{51}\\
        \texttt{Qwen2.5-Math-7B-Instruct \citep{qwen2}} & 7.62B & Qwen & Sep. 2024 & 131K & \ding{51}\\
        \texttt{Qwen2.5-1.5B-Instruct \citep{qwen2}} & 1.54B & Qwen & Sep. 2024 & 131K & \ding{51}\\
        \texttt{Qwen2.5-Math-1.5B-Instruct \citep{qwen2}} & 1.54B & Qwen & Sep. 2024 & 131K & \ding{51}\\
        \texttt{Qwen2.5-3B-Instruct \citep{qwen2}} & 3.09B & Qwen & Sep. 2024 & 131K & \ding{51}\\
        \rowcolor{moonwhite}\texttt{Yi-1.5-34B-Chat \citep{yi}} & 34.4B & 01-AI & May 2024 & 16K & \ding{51}\\
        \rowcolor{moonwhite}\texttt{Yi-1.5-9B-Chat \citep{yi}} & 8.83B & 01-AI & May 2024 & 16K & \ding{51}\\
        \rowcolor{moonwhite}\texttt{Yi-1.5-6B-Chat \citep{yi}} & 6.06B & 01-AI & May 2024 & 16K & \ding{51}\\
        \toprule[1pt]
    \end{tabular}
    }
\end{table*}

\subsection{Evaluation Protocol}
\label{subsec:humaneval_detail}
To ensure robust human evaluations, we implemented two key design principles:
1) Discipline selection: Varied domains (e.g., mathematics, computer science, commonsense reasoning) with matched pair disciplines.
2) Blind judgments: Evaluators were unaware of model or peer judgments. 

Each of the 20 evaluators assessed 70 randomly ordered question pairs via a verification interface (\autoref{appendix_verification}) to mitigate order bias. We computed two primary alignment metrics:
1) Individual Alignment: For each evaluator, the percentage of their non-skipped judgments ($V_1$--$V_{20}$) aligning with method predictions.
2) Consensus Alignment: The proportion of questions where a method's prediction matches the majority of human judgment. For the set of non-skipped questions $Q$, this is defined as:
{
\begin{align}
\text{Consensus} = \frac{1}{|Q|} \sum_{q \in Q} \mathbb{I}\left( \text{pred}_q = \operatorname*{arg\,max}_{j \in \mathcal{V}_q} \text{count}_q(j) \right)
\end{align}
\vspace{-15pt}
}

\textbf{Evaluator Information.} Our human evaluation involved a total of 20 participants. This pool included the two authors and 18 current students not holding terminal degrees, ranging from associate-level to PhD candidates. To ensure a diverse range of perspectives, participants were recruited from various academic backgrounds, and gender balance was also considered. All evaluators possessed strong English proficiency, enabling effective task completion. \autoref{tab:evaluator_diversity} summarizes the academic fields and educational levels of the 18 student evaluators.

\begin{table}[h!]
\centering
\caption{Diversity of Human Evaluator Pool (N=18, excluding authors).}
\label{tab:evaluator_diversity}
\resizebox{\textwidth}{!}{%
    \begin{tabular}{@{}lccccc@{}}
    \toprule
    \textbf{Academic Field} & \textbf{Undergraduate} & \textbf{Graduate (Master's)} & \textbf{PhD Candidate} & \textbf{Associate Degree} & \textbf{Total by Field} \\
    \midrule
    Computer Science & 4 & 1 & 1 & 0 & 6 \\
    Other Engineering & 2 & 1 & 0 & 0 & 3 \\
    Arts & 1 & 1 & 0 & 0 & 2 \\
    Business & 1 & 1 & 0 & 0 & 2 \\
    Humanities & 3 & 0 & 0 & 1 & 4 \\
    Social Sciences & 2 & 0 & 0 & 1 & 3 \\
    \midrule
    \textbf{Total by Level} & \textbf{13} & \textbf{4} & \textbf{1} & \textbf{2} & \textbf{20 (incl. 2 authors)} \\
    \bottomrule
    \end{tabular}%
} 
\end{table}

Evaluators with backgrounds in computer science, mathematics, and engineering had all completed formal coursework in calculus, probability, and statistics, providing them with solid quantitative foundations. In contrast, participants from the arts, social sciences, and humanities may not have received such training. To accommodate differences in subject knowledge, participants were allowed to voluntarily skip domain-specific questions if they felt unqualified to answer them. This mechanism helped minimize the influence of unfamiliar content on the evaluation process.

Domain-specific questions requiring technical expertise were handled by participants with the appropriate background, while common sense reasoning tasks were completed by all evaluators. This design ensured that our human evaluation captured both specialized capabilities and general reasoning performance across a diverse participant pool.

\subsection{IRT Baseline Configurations}
\label{Time_comp_irt_setting}
To provide a robust comparison, we implemented and configured three Item Response Theory (IRT) baseline models. These configurations, detailed below, were chosen to give the IRT models ample opportunity to converge and perform optimally, even on the relatively small 100-question dataset used in our controlled simulation (\autoref{Sec:case_study}).

\textbf{1PL IRT (Rasch Model) and 2PL IRT}:
Both the 1-Parameter Logistic (1PL) IRT model, also known as the Rasch model (where item discrimination is fixed at 1), and the 2-Parameter Logistic (2PL) IRT model were fitted by maximizing the log-likelihood of the observed response matrix. Optimization was performed using the L-BFGS-B algorithm, a quasi-Newton method suitable for bounded parameter estimation. A maximum of 1000 iterations was permitted for the optimization process. To enhance parameter stability and prevent extreme values, L2 regularization with a coefficient of $0.01$ was applied to both model abilities ($\theta$) and item difficulties ($\beta$). For the 2PL model, L2 regularization was additionally applied to the item discrimination parameters ($\alpha$) centered around 1 (i.e., penalizing $(\alpha-1)^2$), encouraging discriminations to be in a standard range. Parameter bounds were enforced during optimization: abilities and difficulties were constrained to the interval $[-5, 5]$, and discrimination parameters for the 2PL IRT were constrained to $[0.1, 5]$ to ensure positivity and interpretability.

\textbf{Multidimensional IRT (Multi-IRT)}:
We implemented a Multidimensional IRT (MIRT) model configured with $D=3$ latent dimensions. This choice was made to explore a more complex latent trait structure, though we acknowledge that estimating parameters for three dimensions from a 100-question dataset can be challenging. The MIRT model was trained for 1000 epochs using the Adam optimizer with a learning rate of $0.01$. Parameter estimation employed variational inference (VI), where the true posterior distributions of abilities, difficulties, and discriminations were approximated. Standard Normal distributions, \(\mathcal{N}(0,1)\), served as priors for all model parameters (abilities, difficulties, and per-dimension discrimination values). For the variational posterior approximation, we utilized a mean-field approach where the posterior for each parameter was modeled as a Normal distribution with a learnable mean and a fixed standard deviation of 1. The model was trained by maximizing the Evidence Lower Bound (ELBO). The reported ability scores for Multi-IRT are the L2 norm of the per-model multidimensional ability vectors, subsequently min-max scaled to a 0-100 range, similar to other IRT models and EIP scores for comparability.

\clearpage
\section{Sensitivity Analysis of the Damping Factor}
\label{app:alpha_sensitivity}
The damping factor $\alpha$ in EIP (cf. \autoref{eq:iter_q}--\autoref{eq:iter_m}) controls the strength of teleportation regularization and influences the smoothness of the stationary distributions over questions and models. To verify that our conclusions are not sensitive to this choice, we recomputed EIP on the full MetaBench response matrix (30 models $\times$ 35,550 questions) across a broad grid of $\alpha$ values and compared the induced model- and question-rankings against a reference run with $\alpha=1.00$ using Spearman correlation. \autoref{tab:alpha_sensitivity_rebuttal} shows that both model and question rankings remain highly stable across this range.

\begin{table}[ht]
    \centering
    \caption{Sensitivity of EIP rankings to the damping factor $\alpha$. We report Spearman correlations of model and question rankings relative to $\alpha=1.00$ on the full MetaBench matrix.}
    \label{tab:alpha_sensitivity_rebuttal}
    \small
    \begin{tabular}{@{}ccc@{}}
        \toprule
        $\alpha$ & $\rho_{\text{model}}$ & $\rho_{\text{question}}$ \\
        \midrule
        0.50 & 0.9911 & 0.9976 \\
        0.60 & 0.9942 & 0.9985 \\
        0.70 & 0.9978 & 0.9992 \\
        0.85 & 0.9987 & 0.9998 \\
        0.95 & 1.0000 & 1.0000 \\
        0.99 & 1.0000 & 1.0000 \\
        \bottomrule
    \end{tabular}
\end{table}

Even under a conservative setting (e.g., $\alpha=0.50$), model rankings remain highly stable ($\rho_{\text{model}}=0.9911$) and question rankings are even more stable ($\rho_{\text{question}}=0.9976$). For $\alpha\ge 0.95$, the induced rankings are effectively identical ($\rho=1.0000$ for both models and questions). In all experiments in the main paper, we fix $\alpha=0.85$, which balances fast geometric convergence with a sufficiently expressive stationary distribution (see \autoref{tab:deltas_by_damping_factor}).

\section{Robustness to Weak Model Inflation}
\label{app:weak_model_inflation}
A natural concern is that expanding the evaluation pool with many weak models could inflate question difficulty scores and perturb rankings. EIP mitigates this effect by attributing a failing model's marginal contribution to question difficulty proportionally to $\pi_m/F(m)$, where $\pi_m$ is the model competency score and $F(m)$ is its total number of failures; weak models tend to have both small $\pi_m$ and large $F(m)$, limiting their per-question influence. We stress-test this scenario by duplicating the 10 weakest models (two identical copies each), expanding the pool from 30 to 50 models, and recomputing EIP on the full response matrix.

\begin{table}[ht]
    \centering
    \caption{Weak-model inflation stress test (30 $\rightarrow$ 50 models by adding 20 weak copies). We compare question difficulties (Pearson) and the ranking of the original 30 models (copies excluded).}
    \label{tab:weak_model_inflation}
    \small
    \begin{tabular}{@{}lc@{}}
        \toprule
        \textbf{Statistic} & \textbf{Value} \\
        \midrule
        Pearson $r$ (question difficulties; 30 vs.\ 50 models) & 0.9972 \\
        Spearman $\rho$ (model ranks; original 30 models) & 0.9969 \\
        Unchanged in Top-20 models (by rank) & 20/20 \\
        Max absolute rank change among Bottom-10 models & 2 \\
        \bottomrule
    \end{tabular}
\end{table}

\autoref{tab:weak_model_inflation} confirms that both the question-difficulty landscape and the induced model ranking are highly stable under this extreme weak-model inflation: the Top-20 models remain unchanged, and the Bottom-10 models shift by at most two positions.
To further illustrate why weak models do not dominate difficulty estimates, \autoref{tab:q14703_contribution} provides a concrete per-question contribution breakdown. For question 14703, 15 models answer incorrectly; GPT-4o alone accounts for 32.69\% of the wrong-side contribution mass, whereas the 7 weakest failing models combined contribute only 18.60\%.

\begin{table*}[t]
    \centering
    \caption{Wrong-side contribution breakdown for question 14703. For each failing model $m$, we report its EIP competency score $\pi_m$ (max-normalized to 100), its total failure count $F(m)$ over the 35,550-question matrix, and its percentage share of the question difficulty, proportional to $\pi_m/F(m)$ among failing models.}
    \label{tab:q14703_contribution}
    \small
    \renewcommand{\arraystretch}{1.05}
    \begin{tabular}{@{}lccc@{}}
        \toprule
        \textbf{Model} & \textbf{Model score $\pi_m$} & \textbf{\# wrong $F(m)$} & \textbf{\% of difficulty} \\
        \midrule
        GPT-4o & 88.3262 & 7308 & 32.69\% \\
        Yi-1.5-34B & 53.2638 & 14518 & 9.92\% \\
        Qwen2.5-7B-Instruct & 50.2553 & 14699 & 9.25\% \\
        Phi-3-medium-128k & 48.3485 & 17207 & 7.60\% \\
        Qwen2-7B & 44.4336 & 16800 & 7.15\% \\
        Qwen2-Math-7B & 40.1448 & 20416 & 5.32\% \\
        Yi-1.5-9B & 39.4718 & 20176 & 5.29\% \\
        Qwen2.5-1.5B-Instruct & 35.2976 & 22845 & 4.18\% \\
        Mistral-7B-Instruct-v0.3 & 31.8986 & 23617 & 3.65\% \\
        Qwen2.5-Math-1.5B & 33.5447 & 27613 & 3.29\% \\
        Qwen2-0.5B & 31.1981 & 28460 & 2.96\% \\
        Qwen2.5-0.5B-Instruct & 28.2489 & 28859 & 2.65\% \\
        gemma-2-2b & 24.7172 & 27192 & 2.46\% \\
        Llama-3.2-3B-Instruct & 23.8847 & 28205 & 2.29\% \\
        Llama-3.2-1B-Instruct & 15.3590 & 31967 & 1.30\% \\
        \bottomrule
    \end{tabular}
\end{table*}

\section{Family-Level Blind Spot Analysis}
\label{app:family_blind_spots}
Another concern is whether ``family-level blind spots'' could systematically bias difficulty estimates, e.g., if a single architecture family disproportionately fails a set of questions and thereby dominates their difficulty mass.
We quantify this effect by aggregating each question's wrong-side contributions by model family (Qwen, Llama, Yi, etc.) and marking a question as \emph{family-dominated} when a single family accounts for more than 90\% of the wrong-side contribution mass and has at least two failing members on that question.
\autoref{tab:family_dominance} shows that such extreme cases are rare in our diverse 30-model pool: only 246 out of 35,550 questions (0.69\%) are family-dominated.
Moreover, none of these dominated questions appear among the globally hardest items (none are in the Top-1,000 hardest questions), and their difficulty scores fall in the moderate-to-easy range (median 11.73; maximum 23.07 on the EIP question scale).

\begin{table}[ht]
    \centering
    \caption{Family-dominated questions under a 90\% dominance threshold (and requiring at least two failing models within the dominating family).}
    \label{tab:family_dominance}
    \small
    \begin{tabular}{@{}lcc@{}}
        \toprule
        \textbf{Family} & \textbf{\# dominated questions} & \textbf{Share of all questions} \\
        \midrule
        Qwen & 232 & 0.65\% \\
        Llama & 13 & 0.04\% \\
        Phi & 1 & 0.00\% \\
        \midrule
        \textbf{Total} & \textbf{246} & \textbf{0.69\%} \\
        \bottomrule
    \end{tabular}
\end{table}

\section{Robustness to Super Model Addition}
\label{app:super_model_addition}
To further probe robustness to pool perturbations, we simulate adding a hypothetical ``super model'' that answers a large fraction of questions correctly (99\%, 95\%, 90\%, or 80\% accuracy) and recompute EIP on the expanded pool.
\autoref{tab:super_model_addition} reports that, after excluding the added synthetic model, the Spearman correlation between the original 30-model ranking and the post-addition ranking remains extremely high ($\rho_{\text{model}}\in[0.9956,0.9973]$), with a maximum absolute rank change of 2 across all settings.

\begin{table}[ht]
    \centering
    \caption{Robustness to adding a hypothetical super model. We append a synthetic model and recompute EIP; correlations and rank changes are computed on the original 30 models (excluding the added model).}
    \label{tab:super_model_addition}
    \small
    \begin{tabular}{@{}cccc@{}}
        \toprule
        \textbf{Super model accuracy} & $\rho_{\text{model}}$ & Mean $|\Delta \text{rank}|$ & Max $|\Delta \text{rank}|$ \\
        \midrule
        99\% & 0.9964 & 0.40 & 2 \\
        95\% & 0.9956 & 0.47 & 2 \\
        90\% & 0.9973 & 0.33 & 2 \\
        80\% & 0.9973 & 0.33 & 2 \\
        \bottomrule
    \end{tabular}
\end{table}

\section{Extended Computational Efficiency Analysis}
\label{app:gpu_efficiency}
We provide an extended computational-efficiency comparison for EIP and IRT baselines on the full MetaBench scale (30 models $\times$ 35,550 questions). \autoref{tab:gpu_efficiency} reports wall-clock runtimes on consumer CPUs and high-performance GPUs. EIP's update is dominated by sparse matrix--vector multiplications and converges in 9 iterations (see \autoref{convergence_proof}), resulting in millisecond-level total runtime even on CPU; GPU acceleration further reduces the runtime. In contrast, IRT requires iterative likelihood maximization and is orders of magnitude slower at this scale.

\begin{table*}[t]
    \centering
    \caption{Computational efficiency comparison on the full response matrix (30 models $\times$ 35,550 questions). EIP totals correspond to 9 iterations until convergence; IRT runtimes report end-to-end fitting times for 1PL and 2PL baselines.}
    \label{tab:gpu_efficiency}
    \small
    \begin{tabular}{@{}lcccc@{}}
        \toprule
        \textbf{Hardware} & \textbf{EIP (1 iter.)} & \textbf{EIP (total)} & \textbf{1PL-IRT} & \textbf{2PL-IRT} \\
        \midrule
        Intel i7 CPU & 0.00597s & 0.05373s & 1782.75s & 3787.03s \\
        MPS (Mac M1) & 0.012296s & 0.110804s & -- & -- \\
        A40 GPU (CUDA) & 0.000581s & 0.005263s & 24.09s & 459s \\
        B200 GPU (CUDA) & 0.000510s & 0.004634s & 5.33s & 0.59s \\
        \bottomrule
    \end{tabular}
\end{table*}

\section{Anchor Question Sampling for Rapid Evaluation}
\label{app:anchor_sampling}
Finally, we study whether EIP rankings can be recovered from a small subset of questions, which is relevant to adaptive or budgeted evaluation settings. We perform difficulty-stratified sampling (deciles by EIP difficulty) and select only 0.5\%, 1\%, or 5\% of questions as anchors, recompute EIP on the anchor-only matrix, and compare the resulting model ranking against the full-question ranking. \autoref{tab:anchor_sampling} shows that even with 0.5\% anchors (180 questions), the induced model ranking remains strongly correlated with the full ranking ($\rho_{\text{model}}=0.8901$), improving further as the anchor budget increases.

\begin{table}[ht]
    \centering
    \caption{Difficulty-stratified anchor sampling for rapid evaluation. We recompute EIP using only the sampled anchors and compare the resulting model ranking against the full-question ranking.}
    \label{tab:anchor_sampling}
    \small
    \begin{tabular}{@{}cccc@{}}
        \toprule
        \textbf{Anchor rate} & \textbf{\# anchors} & $\rho_{\text{model}}$ & Avg $|\Delta \text{rank}|$ \\
        \midrule
        0.5\% & 180 & 0.8901 & 3.07 \\
        1\% & 360 & 0.9199 & 2.47 \\
        5\% & 1780 & 0.9729 & 1.40 \\
        \bottomrule
    \end{tabular}
\end{table}

\section{Asset Licensing and Terms of Use}
\label{app:licenses}
This appendix details the licensing information for all software libraries, datasets, benchmarks, and language models utilized in the experiments presented in this paper. Ensuring compliance with the respective terms of use is critical for reproducible and ethical research.

\subsubsection{Software Library Licenses}
The software libraries employed in this research are governed by a variety of open-source licenses, predominantly permissive ones, which facilitate their use in academic and research settings. \autoref{tab:software_licenses_app} provides a summary.

\begin{table*}[h!]
    \renewcommand{\arraystretch}{1.1}
    \centering
    \caption{Software Library Licenses}
    \label{tab:software_licenses_app}
    \scalebox{0.9}{
    \begin{tabular}{lll}
        \toprule
        \textbf{Library} & \textbf{Version} & \textbf{License} \\
        \midrule
        Together AI Python SDK & 1.2.1 & Apache License 2.0 \\
        OpenAI Python Library & 1.30.3 & Apache License 2.0 \\
        vLLM & 0.5.5 & Apache License 2.0 \\
        FlashInfer & 0.1.5+cu124 & Apache License 2.0 \\
        Flash-Attn & 2.6.3 & BSD 3-Clause \\
        Torch (PyTorch) & 2.4.0 & BSD 3-Clause \\
        Google Generative AI SDK & 0.7.2 & Apache License 2.0 \\
        torch (PyTorch) & 2.5.1 & BSD 3-Clause \\
        NVIDIA CUDA Toolkit & 12.4 & NVIDIA CUDA Toolkit EULA \\ 
        \bottomrule
    \end{tabular}%
    }
\end{table*}

\paragraph{Notes on Software Library Licenses:}
\textbf{Together AI Python SDK 1.2.1} is licensed under Apache 2.0, as confirmed by its official repository. Community SDKs may vary.
\textbf{OpenAI Python Library 1.30.3} is licensed under Apache 2.0 according to its official repository, though older PyPI versions might indicate MIT; the current repository is considered authoritative.
\textbf{vLLM 0.5.5} is licensed under Apache 2.0, as per its PyPI page and official repository.
\textbf{FlashInfer 0.1.5+cu124} is licensed under Apache 2.0, with the "+cu124" denoting CUDA 12.4 compatibility.
\textbf{Flash-Attn 2.6.3} is licensed under BSD 3-Clause, according to its official repository.
\textbf{PyTorch (Torch 2.4.0 \& 2.5.1)} is licensed under BSD 3-Clause, as confirmed by its official repository and PyPI for version 2.5.1.
\textbf{Google Generative AI SDK 0.7.2} is licensed under Apache 2.0, as per its official repository. Newer developments may be in `google-genai` under the same license.
\textbf{NVIDIA CUDA Toolkit 12.4} is governed by the NVIDIA CUDA Toolkit EULA, a proprietary license permitting development and specific redistribution.

\subsubsection{Dataset and Benchmark Licenses}
The datasets and benchmarks employed are governed by various open-source licenses or public domain dedications. \autoref{tab:dataset_licenses_app} summarizes this information.

\begin{table*}[h!]
    \renewcommand{\arraystretch}{1.1}
    \centering
    \caption{Dataset and Benchmark Licenses}
    \label{tab:dataset_licenses_app}
    \scalebox{0.9}{
    \begin{tabular}{p{5cm}p{9cm}} 
        \toprule
        \textbf{Dataset/Benchmark} & \textbf{Stated License(s)} \\
        \midrule
        BBH (Big-Bench Hard) \citep{suzgun2022BBH} & Apache 2.0 (for BIG-Bench \citep{srivastava2023Big-Bench}) / MIT (for specific BBH repository by Suzgun et al.) \\
        MMLU-Pro \citep{wang2024MMLU-Pro} & Apache License 2.0 \\
        GSM8k \citep{cobbe2021GSM8K} & MIT License (likely, from original OpenAI repository) \\
        GPQA \citep{rein2023gpqa} & MIT License \\
        MATH \citep{hendrycks2021math} & MIT License \\
        HellaSwag \citep{zellers2019HellaSwag} & MIT License (original author's repository) \\
        \bottomrule
    \end{tabular}%
    }
\end{table*}

\paragraph{Notes on Dataset and Benchmark Licenses:}
\textbf{BBH} is derived from BIG-Bench \citep{srivastava2023Big-Bench}, which is under Apache 2.0. The specific repository for BIG-Bench Hard by \citet{suzgun2022BBH} uses an MIT license. The source utilized determines the applicable license.
\textbf{MMLU-Pro} is clearly licensed under Apache 2.0 by TIGER-AI-Lab \citep{wang2024MMLU-Pro}.
\textbf{GSM8k}, as originally released by OpenAI \citep{cobbe2021GSM8K}, is likely under an MIT License. It's important to note that variants like GSM8k-Platinum may use CC-BY-4.0; the specific source license is key.
\textbf{GPQA} by \citet{rein2023gpqa} is under an MIT License. The SuperGPQA variant uses ODC-BY.
\textbf{MATH} dataset by \citet{hendrycks2021math} is clearly MIT licensed.
\textbf{HellaSwag}, from the original authors' repository \citep{zellers2019HellaSwag}, is under an MIT License. Other platforms hosting HellaSwag (e.g., Kaggle: CC0; Hugging Face/jon-tow: CC BY NC 4.0) may have different licenses; the original MIT license was considered authoritative for the version used.

\subsubsection{Language Model Licensing and Terms of Use}
Language models are subject to API service agreements or specific open-weight licenses. \autoref{tab:model_licenses_terms_app} provides an overview.

\begin{table*}[h!]
    \renewcommand{\arraystretch}{1.1}
    \centering
    \caption{Language Model Licensing and Terms Overview}
    \label{tab:model_licenses_terms_app}
    \scalebox{0.85}{
    \begin{tabular}{p{6cm}p{8cm}} 
        \toprule
        \textbf{Model Family/Provider} & \textbf{Primary Governing Terms} \\
        \midrule
        OpenAI (ChatGPT series, GPT-4o \citep{openai2024gpt4ocard}, GPT-4o-mini \citep{openai2024gpt4technicalreport}) & OpenAI Usage Policies \& Services Agreement \\
        Google Gemini (API: Gemini-1.5-Pro, Gemini-1.5-Flash \citep{geminiteam2024gemini15unlockingmultimodal}) & Google APIs ToS \& Gemini API Additional ToS \\
        Google Gemma (Open Weights: Gemma-2 series \citep{gemmateam2024gemma2improvingopen}) & Gemma Terms of Use \\
        Meta Llama (Llama 3.1, Llama 3.2 series \citep{grattafiori2024llama3herdmodels}) & Llama Community License Agreement \& Acceptable Use Policy \\
        DeepSeek AI (DeepSeek-V3 \citep{deepseekai2024deepseekv3technicalreport}, Coder-V2-Lite, V2-Lite-Chat \citep{deepseekai2024deepseekcoderv2breakingbarrierclosedsource}) & DeepSeek Model License / MIT License (varies by model) \\
        Mistral AI (Mistral-7B-Instruct-v0.3 \citep{jiang2023mistral7b}) & Apache License 2.0 \\
        Alibaba Qwen (Qwen2, Qwen2.5 series \citep{qwen2}) & Apache License 2.0 (most) / Qwen RESEARCH LICENSE (Qwen2.5-3B-Instruct) \\
        01.AI Yi (Yi-1.5 series \citep{yi}) & Apache License 2.0 \\
        Microsoft Phi (Phi-3, Phi-3.5 series \citep{abdin2024phi3technicalreporthighly}) & MIT License \\
        \bottomrule
    \end{tabular}%
    }
\end{table*}

\paragraph{Notes on Language Model Licenses:}
\textbf{OpenAI Models'} API use is governed by OpenAI's Usage Policies and Services Agreement. Customer Content (input/output for paid API tiers) is not used for training OpenAI models, as per their Services Agreement.
\textbf{Google Gemini API} is governed by Google APIs Terms of Service and the Gemini API Additional Terms of Service. Data usage for improvement depends on the service tier, according to the Gemini API Terms of Service.
\textbf{Google Gemma Open Weights} are governed by the Gemma Terms of Use, allowing modification and distribution with attribution and adherence to a Prohibited Use Policy.
\textbf{Meta Llama Models} are released under version-specific Llama Community License Agreements, requiring attribution and adherence to an Acceptable Use Policy (AUP). Commercial use by entities with over 700 million monthly active users typically requires a separate license, as detailed in the Llama 3 Community License, for example.
\textbf{DeepSeek AI Models'} code is often MIT licensed. Some model weights are also MIT (e.g., V3-0324 release), while others are under a custom DeepSeek Model License with use restrictions. The specific license for each model must be checked.
\textbf{Mistral AI Models}, such as Mistral-7B-Instruct-v0.3 \citep{jiang2023mistral7b}, are available under Apache 2.0. Premier models may have different licenses.
\textbf{Alibaba Qwen Models} are mostly Apache 2.0. However, specific versions like Qwen2.5-3B-Instruct use a non-commercial Qwen RESEARCH LICENSE AGREEMENT. The license for each specific model should be verified.
\textbf{01.AI Yi Models'} open-source releases \citep{yi} are under Apache 2.0.
\textbf{Microsoft Phi Models}, such as those described by \citet{abdin2024phi3technicalreporthighly}, are generally released under the permissive MIT License.

\paragraph{Compliance Statement}
To the best of the authors' knowledge, and based on the diligent research of the licenses and terms detailed herein, the use of all software, datasets, benchmarks, and language models in this study complies with their respective governing terms and conditions. This proactive approach to documenting and adhering to licensing requirements is fundamental to responsible and ethical AI research.

\definecolor{framecolor}{HTML}{A2D2FF}
\definecolor{backcolor}{HTML}{E0F1FF}
\definecolor{titleback}{HTML}{A2D2FF}
\newtcolorbox[auto counter, number within=section]{promptbox}[2][]{%
    colframe=framecolor, 
    colback=backcolor,         
    coltitle=black,         
    fonttitle=\bfseries,    
    colbacktitle=titleback,   
    title={Prompt \thetcbcounter: #2}, 
}
\newpage
\begin{promptbox}{Example Prompt of BBH, few-shot CoT Prompts are formed by original research team}
You are a logic expert, you are given questions that involve enumerating objects and asking the model to count them.\\
\\
Example 1:\\
<Example Question 1>\\
Answer: \\
<Example Answer 1>\\
\\
Example 2:\\
<Example Question 2>\\
Answer: \\
<Example Answer 2>\\
\\
Example 3:\\
<Example Question 3>\\
Answer: \\
<Example Answer 3>\\
\\
Example 4:\\
<Example Question 4>\\
Answer: \\
<Example Answer 4>\\
\\
Example 5:\\
<Example Question 5>\\
Answer:\\
<Example Answer 5>\\
\\
1. \textbf{Answer Formatting}:\\
   - \textbf{Multiple Choice Questions with Options}: Only select from the provided options (e.g., A, B, C, or D). If you calculate a numerical answer (e.g., "10") that matches an option, respond with the corresponding option letter (e.g., "\boxed{A}"), \textbf{not} the number itself. Failing to select the option will be marked as incorrect.\\
   - \textbf{Short Answer Questions}: Provide the final answer in the format "\boxed{X}", where X is the correct answer. Do not include any additional formatting or explanations inside "$\boxed{}$". Do not answer only the serial number as well, Remember if answer is 1.Henry, respond with "\boxed{Henry}" but not "\boxed{1}". \\
\\
2. \textbf{Answer Example}:\\
   - For multiple choice: If the calculated answer is 10 and "10" corresponds to option C, respond with "\boxed{C}".\\
   - For math questions: If the correct answer is "42," respond with "\boxed{42}".\\
   - For short answer question other than math, Remember if answer is 1.Henry, respond with "\boxed{Henry}" but not "\boxed{1}".\\
\\
3. \textbf{Additional Instructions}:\\
   - Place any reasoning or calculations outside of the "\boxed{}" notation if necessary.\\
   - Use "\boxed{}" only for the final answer.\\
\\
4. \textbf{Think step by step and answer carefully}.\\
   - You must think before answering the question.\\
   - You must answer step by step.\\
Question: <question>\\
\end{promptbox}

\begin{promptbox}{Example Prompt of GSM8k, CoT Prompts are sampled from original test set}
You are a math expert, and you are tasked with answering questions in math with example to reference.\\
\\
Example 1:\\
<Example Question 1>\\
Answer: \\
<Example Answer 1>\\
\\
Example 2:\\
<Example Question 2>\\
Answer: \\
<Example Answer 2>\\
\\
Example 3:\\
<Example Question 3>\\
Answer: \\
<Example Answer 3>\\
\\
Example 4:\\
<Example Question 4>\\
Answer: \\
<Example Answer 4>\\
\\
Example 5:\\
<Example Question 5>\\
Answer:\\
<Example Answer 5>\\
\\
You've finished reading all the examples\\
Now read the question carefully and answer according to the following guidelines:\\

1. \textbf{Answer Formatting}:\\
   - \textbf{Multiple Choice Questions with Options}: Only select from the provided options (e.g., A, B, C, or D). If you calculate a numerical answer (e.g., "10") that matches an option, respond with the corresponding option letter (e.g., "\boxed{A}"), \textbf{not} the number itself. Failing to select the option will be marked as incorrect.\\
   - \textbf{Short Answer Questions}: Provide the final answer in the format "\boxed{X}", where X is the correct answer. Do not include any additional formatting or explanations inside "\boxed{}". Do not answer only the serial number as well, Remember if answer is 1.Henry, respond with "\boxed{Henry}" but not "\boxed{1}". \\
\\
2. \textbf{Answer Example}:\\
   - For multiple choice: If the calculated answer is 10 and "10" corresponds to option C, respond with "\boxed{C}".\\
   - For math questions: If the correct answer is "42," respond with "\boxed{42}".\\
   - For short answer question other than math, Remember if answer is 1.Henry, respond with "\boxed{Henry}" but not "\boxed{1}".\\
   \\
3. \textbf{Additional Instructions}:\\
   - Place any reasoning or calculations outside of the "\boxed{}" notation if necessary.\\
   - Use "\boxed{}" only for the final answer.\\
   \\
4. \textbf{Think step by step and answer carefully}.\\
   - You must think before answering the question.\\
   - You must answer step by step.\\
Question: <question>
\end{promptbox}

\section{Verification Test Example}
\label{appendix_verification}
\begin{promptbox}{Verification Test Example}
Welcome to EIP Test Program!\\
\\
Please enter your First Name for identification:\\

===== Now at Group 1 / 100 (Group ID: 0) =====\\
\\
Progress: [------------------------------] 0.0\%\\
\\
1. Question: \\
Evaluate the expression \[
(751 - 745) + (748 - 742) + (745 - 739) + (742 - 736) + \cdots + (499 - 493) + (496 - 490).
\]

2. Question: \\
What is the value of the inflection point of \( f(x) = \frac{10 \ln{x}}{x^2} \)? \\

A.\quad 2.000 \quad B.\quad 1.587 \quad C.\quad 0.693 \quad D.\quad 1.203 \quad E.\quad 3.014 \quad F.\quad 2.718 \quad G.\quad 4.000 \quad H.\quad 3.142 \quad I.\quad 1.000 \quad J.\quad 2.301 \\

Which question is more difficult? Enter 0 if unable to judge, otherwise enter 1 or 2:\\
\end{promptbox}

\clearpage

\newpage

\section{Disclosure of LLM Usage}
We used large language models solely for editorial assistance (grammar, phrasing, and clarity). No model was used to generate technical content, derive equations, design experiments, analyze results, or write code. All datasets, algorithms, and empirical results originate from the authors' implementations and public benchmarks. No proprietary or sensitive data were submitted to third-party services.

\end{document}